\let\csname enumerate*\endcsname\undefined
\let\csname endenumerate*\endcsname\undefined
\setlist{topsep=1ex,parsep=.5ex,itemsep=0pt}
\newtheorem{observation}{Observation}
\newcommand{\nat}{\mathbb{N}}
\newcommand{\real}{\mathbb{R}}
\newcommand{\terms}[1][]{%
        \def\tempa{#1}%
        \def\tempb{}%
        \ifx\tempa\tempb\mathrm{T}_{\Sigma}%
        \else\mathrm{T}_{\Sigma\cup#1}%
        \fi%
}%
\DeclareMathOperator{\opt}{opt}
\newcommand{\domain}{\mathrm{dom}}
\newcommand{\lex}{<_{\mathrm{lex}}}
\newcommand{\prefixes}[1]{\mathit{prefixes}(#1)}
\newcommand{\pow}[1]{\ensuremath{2^{#1}}}  
\newcommand{\card}[1]{\left|#1\right|}
\newcommand{\typemapping}{\mathit{type}}
\newcommand{\types}{\ensuremath{\Gamma}\xspace}
\newcommand{\type}{\ensuremath{\gamma}\xspace}
\newcommand{\etype}{\ensuremath{\tau}\xspace}
\newcommand{\alg}{{\mathcal{A}}}
\newcommand{\dom}{{\mathbb{A}}}
\newcommand{\val}{\mathrm{val}}
\newcommand{\groundings}[2]{[#1\to #2]}
\newcommand{\samp}{\mathcal S}
\newcommand{\emptystr}{\varepsilon}
\newcommand{\powerset}[1]{\ensuremath{2^{#1}}}
\newcommand{\block}[2]{[#1]_{#2}} 
\newcommand{\partition}[2]{(#1/{#2})} 
\newcommand{\accept}{\mathit{accept}}
\newcommand{\equ}{\mathit{equiv}}
\newcommand{\simrc}{\sim^{\mathrm{rc}}}
\newcommand{\calg}{\ensuremath{\mathcal C}\xspace}
\newcommand{\collages}{\ensuremath{\mathbb C}\xspace}
\newcommand{\cop}[1]{\langle#1\rangle}
\definecolor{UmUBlue}{RGB}{42,71,101}
\definecolor{UmUGreen}{RGB}{115,167,144}
\definecolor{UmUGold}{RGB}{215,177,124}
\definecolor{UmUPink}{RGB}{234,186,185}
\definecolor{NoUmUColour}{RGB}{84,142,202}
\newcommand{\strings}[1]{\mathit{str}(#1)}
\newcommand{\pred}[1]{\mathit{pred}(#1)}
\newcommand{\aut}[1]{\textsc{Aut}(#1)}
\title{An Algebraic Approach to Learning and Grounding}
\author{
\Name{Johanna Björklund\nametag{\thanks{This work was partially supported by the Swedish Research Council and the Wallenberg Autonomous Systems, AI, and Software Program}}} \Email{johanna@cs.umu.se}\\ 
\and 
\Name{Adam {Dahlgren Lindström}} \Email{dali@cs.umu.se}\\ 
\and 
\Name{Frank Drewes\nametag{\thanks{Authors are given in alphabetical order}}} \Email{drewes@cs.umu.se}\\
\addr Dept.~Computing Science, Umeå University}
\begin{document}

\maketitle

\begin{abstract}

We consider the problem of learning the semantics of composite algebraic expressions from examples. The outcome is a versatile framework for studying learning tasks that can be put into the following abstract form: The input is a partial algebra $\alg$ and a finite set of examples $(\varphi_1, O_1), (\varphi_2, O_2), \ldots$, each consisting of an algebraic term $\varphi_i$ and a set of objects~$O_i$. The objective is to simultaneously fill in the missing algebraic operations in $\alg$ and ground the variables of every $\varphi_i$ in $O_i$, so that the combined value of the terms is optimised. We demonstrate the applicability of this framework through case studies in grammatical inference, picture-language learning, and the grounding of logic scene descriptions. 
\end{abstract}

\section{Introduction}
Modern natural language processing literature, and deep learning-based methods in particular, is critiqued for using terms like learning, grounding, meaning, and understanding loosely without clear definitions~\citep{bender2020climbing}.
Large language models do not learn representations that are sufficient for the applications they are used for, resulting in, e.g., perpetuation of discriminatory bias~\citep{bender2021dangers}.
Successful representational systems are usually modular, in that they describe complex concepts by combining representations for simpler ones. Examples include first-order predicate logic, where predicates, variables, and connectives are assembled into logic expressions, or natural language (NL), where lexical and grammatical constructs are the building blocks of sentences, which in turn combine to paragraphs, and so on. 
 When learning the components of a representational system from real-world observations, we seldom observe instances of a concept in isolation. Even in picture books for children, an image that is intended to convey the notion of a shape may also be understood to stand for, e.g., a colour or a size, so the child may need several observations to grasp the concept.  We are therefore interested in exploring the conditions under which composite semantics can be learnt by example. 
 
\begin{comment}
In our study, we also consider cases when the representation is partial or irrelevant: Take for example the image caption-pair of Figure~\ref{fig:coco}. Here, a learning system should ground the lexical items `cow' in the part of the image showing the animal, and not in the trees, clouds, grass, etc, though all are candidates for being designated objects in the image. Other challenges are that the description contains information that cannot be verified in the images, e.g., the fact that there is no other cow in the field, or that it is a sunny day, and that the target segment contains superfluous information: similarly to the example of the picture book, the color and position of the cow is not mentioned  in the caption and should be abstracted away from the visual information.
\begin{wrapfigure}{r}{0.4\textwidth}
  \begin{center}
    \centering
    \begin{minipage}[t]{.9\linewidth}
    \includegraphics[trim=110 30 60 210, clip, width=\linewidth]{}\\[1ex]
    {\small \tt There is only one cow on this field on a sunny day.}
    \end{minipage}
    \end{center}
    \caption{An image-caption pair from the COCO dataset\footnote{\url{https://cocodataset.org/#explore?id=135800}}}
    \label{fig:coco}
\end{wrapfigure}
\end{comment}

To this end, we introduce the notion of a \emph{template algebra}. This is an algebra $\alg$ that is defined with respect to a set of typed operator symbols $\Sigma$. The algebra is non-standard in that (i) there is a dedicated \emph{evaluation type} $\tau$ whose domain in $\alg$ is a linearly ordered set, and (ii) a subset of the operators for the symbols in $\Sigma$ have been left undefined. Given a set of terms over $\Sigma$ of type $\tau$, the learning task is to choose the missing operators (from families of candidate ones) so that the combined value of the terms is optimal. 

To integrate domains that are not finitely generated, the terms may contain variables which are assigned values from a target domain during evaluation. With this addition, a learning example is a pair $(\varphi, O)$, where $\varphi$ is a $\Sigma$-term with variables, and $O$ is a set of suitable typed elements that can be substituted for the variables. Again, the goal is to find an optimal solution over the entire set of examples, but the task has become more complicated, as the learning algorithm must simultaneously decide what object to map to what variable, and what operators to choose for those missing. Coming back to the picture book example, this correspond to simultaneously inferring what `red' means in a phrase like ``Look at the red ball'' and what object in the picture constitutes the red ball.  

This second subtask, which can be described as linking abstract representations to target domains, is known as \emph{grounding}. Other examples of grounding are connecting symbols on a map to features of the terrain, and to link words in a natural-language search query to tables, columns, and threshold values in a database. Also when training is done and the algebra complete, grounding can still remain challenging for unseen examples. However, it is likely that the more accurately the system has learnt the target operations, the better it is positioned also to solve the remaining problem of grounding, because it can more efficiently limit the search space when mapping objects to variables. 

The purpose of our framework is to provide flexible means for setting up formally defined symbolic, neural, or neural-symbolic tasks for learning from examples (where the emphasis is on tasks involving neural components). Moreover, this is done in such a way that corpora -- sets of examples of the type defined by the task -- can be generated automatically using grammars and automata. In summary, our approach provides a unified way to
\begin{itemize}
\item formalize learning tasks that involve mixtures of symbolic and neural components,
\item study the problem of grounding as an integral part of other learning tasks, and
\item automatically generate sets of examples for studying proposed learning algorithms.
\end{itemize}

To demonstrate the framework's ability to support a variety of classical and modern learning approaches, we discuss instantiations of the framework in three learning scenarios. The first 
is the grammatical inference of regular string language from characteristic sets. The second considers picture-language learning and involves the estimation of affine transformations with neural networks. The final instantiation serves to ground the variables of a logic scene description in geometric objects, by training neural networks to recognise atomic properties of the objects such as colour, shape, and size. This ability to integrate finite and infinite-state methods makes the proposed framework a contribution to the field of neuro-symbolic reasoning.

\subsection{Related work}
\label{sec:related}
Learning and grounding is addressed in symbolic, neural, and neuro-symbolic settings throughout the literature.
Our framework provides a way to systematically study the different methods and how they approach the grounding problem.
%With our framework, the described methods can be formalised for comparison and further investigations.
In this section we outline important work in the respective fields.

% Formalisations?
Visual grounding has been addressed in neuro-symbolic machine learning in various ways.
%Probabilistic logic programming
Inductive Logic Programming (ILP, \citet{muggleton1994inductive}) is a form of symbolic learning and reasoning with first-order clauses for learning relations in data.
Background knowledge is used together with examples to induce a hypothesis, in the form of a logic program, describing positive and negative examples in given data.
ILP is applied to, e.g., scientific discovery, robotics, program analysis~\citep{cropper2022inductive}, with neuro-symbolic approaches to machine vision problems~\citep{dai2015logical,varghese2021human}.

Probabilistic logic programming (PLP~\citet{dantsin1992probabilistic,ng1992probabilistic}), provides reasoning under uncertainty, with methods such as~\cite{de2007problog}.
\cite{raedt2008probabilistic} provides an overview of the combination of PLP and ILP, Probabilistic ILP.
Neuro-symbolic PLP, such as DeepProbLog, introduces predicates that are realised by neural networks to be trained, as a way to ground and reason about visual concepts~\citep{ManhaeveEtAl:2018,weber2019nlprolog,winters2021deepstochlog}.
%Visual representations, VQA
In the Neuro-Symbolic Concept Learner by \citet{Mao2019NeuroSymbolic}, programs are taught to perform visual question answering by combining a neural perception module, a semantic parsing module to construct programs from language, and a program executor which bridges these components to provide answers.
% Other work builds on similar ideas~\citep{zhang_visually_2021}.
The idea of Neural State Machines by \citet{hudson2019learning} builds on probabilistic graphs to learn and reason about concepts for visual question answering.
Logic Tensor Networks~\citep{serafini2016logic,badreddine2022logic} have been used for semantic image interpretation as a partial grounding problem~\citep{donadello2017logic}.
\citet{garcez2019neural} gives an overview of how learning and reasoning can be integrated, and how logical representations are used for that purpose.

Answer Set Programming (ASP) is another approach to knowledge representation and reasoning based on searching for stable models, answer sets, as solutions to given logic programs~\citep{niemela1999logic,marek1999stable,lifschitz2002answer}. 
ASP is used in decision support systems, planning robotics, and many other domains~\citep{erdem2016applications}, with neuro-symbolic examples in reasoning about objects in video~\citep{suchan2018visual}.

Finally, grounding in language and vision with neural networks is extensively researched for tasks such as Visual Question Answering (VQA)~\citep{antol2015vqa,fukui-etal-2016-multimodal,chaplot2018gated}.
Neuro-symbolic approaches to VQA include~\citep{yi2018neural}.

The above domains all have different perspectives on learning and grounding, and there are overlaps, such as between ILP and PLP.
We can formalise the described methods and the problems they try to solve, to examine the similarities, challenges, and opportunities.

\subsection{Outline}
The paper is outlined as follows: Section~\ref{sec:related} reports on related work on algebraic approaches and neuro-symbolic learning. Section~\ref{sec:preliminaries} recalls fundamental concepts and results, and Section~\ref{sec:algebraic} gives introduces the algebraic framework. Section~\ref{sec:applications} discusses the framework's application to learning and grounding of regular string languages, picture languages, and logic scene descriptions. Section~\ref{sec:future} provides  a summary and directions for future work. 

\section{Preliminaries}
\label{sec:preliminaries}
We begin by recalling some useful definitions and notations from literature. 

\subsection{Sets, numbers, and relations}
The set of natural numbers (including $0$) is denoted by $\nat$, and for $n\in\nat$, we write  $[n]$ as shorthand for $\{1,\dots,n\}$. In particular, $[0]=\emptyset$. For a set $S$, we denote by $\pow{S}$ the powerset of $S$. The domain of a partial function $\delta$ is denoted by $\domain(\delta)$.

Let $\sim$ be an equivalence relation on a superset of $S$.  The \emph{equivalence class} of~$s$ in~$S$ with respect to~$\sim$ is the set
$[s]^{S}_{\sim} = \{s' \in S \mid s \sim s'\}$. If $S$ is clear from the context, we may simply write $[s]^{S}_{\sim}$ as $[s]_{\sim}$.  It should be clear
that $[s]_\sim$~and~$[s']_\sim$ are equal if $s \sim s'$, and disjoint otherwise, so~$\sim$ induces a partition
$\partition{S}{\sim} = \{ [s]_\sim \mid s \in S \}$
of~$S$. We denote by $\sim_S^\star$ the transitive, reflexive, and symmetric closure of ${\sim}\cap(S\times S)$. In fact, we shall also use this notation if $\sim$ is an equivalence relation on a \emph{subset} of $S$. In this case, ${\sim_S^\star}={\approx_S^\star}$, where ${\approx}={\sim}\cup\{(w,w)\mid w\in S\}$.

\subsection{Strings, Languages, and Automata}
An alphabet is a finite nonempty set. Given an alphabet $\Xi$ we write $\Xi^*$ for the set of all strings over~$\Xi$, and~$\emptystr$ for the empty string. The lexicographic order on $\Xi^*$ is denoted by $\lex$.
A string language is a subset of $\Xi^*$.  Let $u, w \in \Xi^*$. The string $u$ is a \emph{prefix} of $w$ if there is a string $v \in \Xi^*$ such that $uv=w$. We denote the set of all prefixes of $w$ by $\prefixes{w}$. Similarly for a set of strings $S \subseteq \Xi^*$, $\prefixes{S} = \cup_{w \in S} \prefixes{w}$.

We denote a (partial) \emph{deterministic finite-state automaton} (DFA) is a tuple $M=(Q, \Xi, \delta, q_0, F)$, where $Q$ is a finite set of \emph{states}, $\Xi$ is an alphabet of \emph{input symbols}, the \emph{transition function} $\delta$ is a partial function $\delta\colon Q\times \Xi\to Q$, $q_0 \in Q$ is an \emph{initial state}, and $F \subseteq Q$ is a set of \emph{final states}. As usual, we denote by $\delta_M$ the partial function $\delta_M\colon \Xi^*\to Q$ such that $\delta_M(\emptystr)=q_0$ and $\delta_M(wa)=\delta(\delta_M(w),a)$ for all $w\in\Xi^*$ and $a\in\Xi$ such that the right-hand side $\delta(\delta_M(w),a)$ is defined. The language $L(M)$ \emph{recognised} by $M$ is defined in the standard way: $L(M)=\{w\in\Xi^*\mid\text{$\delta_M(w)$ is defined and in $F$}\}$. A string language is \emph{regular} if and only if it is recognised by some DFA. A DFA $N=(Q', \Xi, \delta', q_0, F)$ is a \emph{subautomaton} of $M$ if $Q' \subseteq Q$ and $\delta' \subseteq \delta$ (if $\delta$ is viewed as a subset of $Q \times \Xi \times Q$). 

Let $L \subseteq \Xi^*$ be a string language and $w \in \Xi^*$. To simplify notation, may use $L$ as a predicate and write $L(w)$ to denote the statement $w \in L$. The \emph{Nerode congruence} $\sim_L$ with respect to $L$ is the binary relation on $\Xi^*$ defined by $u \sim_L v$ if and only if $L(uw) = L(vw)$ for all $w \in \Xi^*$. It is well-known that $\sim_L$ is an equivalence relation, and as such partitions $\Xi^*$ into a set of congruence classes $\partition{\Xi^*}{\sim_L}$. We recall from the Nerode theorem that $\card{\partition{\Xi^*}{\sim_L}}$ is finite if and only if $L$ is regular. A string $u\in\Xi^*$ is \emph{live} (with respect to $L$ if $u=\emptystr$ or $uw\in L$ for some $w\in\Xi^*$. A congruence class in $\partition{\Xi^*}{\sim_L}$ is live if it contains a live string. If $L$ is regular, the canonical DFA recognizing $L$ is $M_L=(Q,\Xi,\delta,q_0,F)$ where $Q=\{B\in\partition{\Xi^*}{\sim_L}\mid\text{$B$ is live}\}$ and $q_0=\block{\emptystr}{\sim_L}$. (Note that we consider the partial canonical DFA $M_L$ that contains only the live congruence classes as states.)

%We immediately extend $\delta$ to $(\hat{\delta}_w)_{w \in \Xi^*}$ where $\hat{\delta}_w \colon \powerset{Q} \to \powerset{Q}$ as follows: For every string $w \in \Xi^*$ and set of states $P \subseteq Q$,
%\[
%\vspace{-.5ex}
%\hat{\delta}_w(P) = 
%\left\{ 
%\begin{array}{l@{\quad}l} 
%P & \textrm{if } w = \varepsilon \textrm{, and } \\
%\bigcup_{p \in P} \; \hat{\delta}_{w'}(\delta_f(p)) & \textrm{if } w = fw' \textrm{ for some $f \in \Xi$, and $w' \in \Xi^*$.} \\
%\end{array} 
%\right.
%\vspace{-.5ex}
%\]
%The language recognised by $A$ is $\lang{A} = \{w \in \Xi^* \mid \hat{\delta}_w(\{q_0\}) \, \cap \, F  \not = \emptyset\}$. A state $q \in Q$ is \emph{useless} if there does not exist strings $u,w \in \Sigma^*$ such that $q \in \hat{\delta}_u(\{q_0\})$ and $F \cap \hat{\delta}_w(\{q\}) \not = \emptyset$.  From here on, we identify $\delta$ with $\hat{\delta}$.

%Let $A$ be a set. For each $k \in \nat$, we denote by $\op(A)$ the set of all functions $A^k \to A$. 

\subsection{Terms and Algebras}
For a set $\types$ of types, a \emph{$\types$-typed alphabet} or simply \emph{typed alphabet} is a pair $(\Sigma,\typemapping_\Sigma)$ such that $\Sigma$ is a finite set of symbols and $\typemapping_\Sigma\colon\Sigma\to\types^+$ assigns to every $\sigma \in \Sigma$ a type signature $\typemapping_\Sigma(\sigma) \in \types^+$. If $\typemapping_\Sigma(\sigma)=\type_1\cdots\type_k\type$, we usually indicate this by writing $\sigma\colon\type_1\cdots\type_k\to\type$, leaving $\typemapping_\Sigma$ implicit. If $k=0$, we furthermore abbreviate $\sigma\colon\type_1\cdots\type_k\to\type$ by $\sigma\colon\type$, and if all symbols $\sigma\in\Sigma$ satisfy $\card{\typemapping_\Sigma(\sigma)}=1$, then we call $\Sigma$ a \emph{leaf alphabet}. 

Let $\Sigma$ be a $\types$-typed alphabet. The family $(\terms^\type)_{\type\in\types}$ of sets $\terms^\type$, called \emph{terms of type $\type$ over $\Sigma$}, is defined by simultaneous induction: $\terms^\type$ is the smallest set of formal expressions $f[t_ 1,\dots,t_k]$ such that $f[t_ 1,\dots,t_k]\in\terms^\type$ for all $f\colon\type_1\cdots\type_k\to\type$ in $\Sigma$ and $t_1\in\terms^{\type_1},\dots,t_k\in\terms^{\type_k}$.\footnote{We assume that the meta-symbols `$[$', `$]$', and the comma are not in $\Sigma$.}

Regular tree grammars are a well-known formalism for generating sets of terms. To recall them briefly here, such a grammar is a tuple $g=(\Sigma,N,R,S)$ where $\Sigma$ and $N$ are disjoint \type-typed alphabets for some set \types of types, where $N$ is a leaf alphabet, $R$ is a finite set of rules of the form $A\to t$ where $A\colon\type$ is in $N$ and $t\in\terms^\type$ for some $\type\in\types$, and $S\in N$. Derivations according to $g$ start with $S$ and replace, in each step, an occurrence of a leaf $A\in N$ in the current term $s$ by a term $t$ such that $(A\to t)\in R$. If $s'$ is the resulting term, this is written $s\to_gs'$. The language $L(g)$ generated by $g$ is the set of all term $s\in\terms^\type$ such that $S\to_g^* s$, where $\to_g^*$ denotes the transitive reflexive closure of $\to_g$. Note that $L(g)\subseteq\terms^\type$, where \type is the type of $S$, in other words, $S\colon\type$.

Let $\Sigma$ be a $\types$-typed alphabet. A $\Sigma$-algebra $\alg$ is a pair $((\dom_\type)_{\type \in \types},(f_\alg)_{f\in\Sigma})$ where $\dom_\type$ is a set for all $\type\in\types$ and $f_\alg$ is a function that maps $\dom_{\type_1} \times \cdots \times \dom_{\type_k}$ to $\dom_\type$ for every $f\colon \type_1 \cdots \type_k\to\type$ in $\Sigma$. A term $t\in\terms^\type$ can now recursively be evaluated with respect to $\alg$ as usual: if $t=f[t_1,\dots,t_k]$ then $\val_\alg(t)=f_\alg(\val_\alg(t_1),\dots,\val_\alg(t_k))$. Thus, $\val_\alg(t)\in \dom_\type$.

We extend the evaluation of terms to terms with variables in a standard manner. For this, let $X_\ell$ (where $\ell\in\nat$) be a \types-typed leaf alphabet of symbols $x_1,\dots,x_\ell$ called variables, where $X_\ell$ is disjoint with $\Sigma$. Now, consider a term $t\in\terms[X_\ell]^\type$, where $\typemapping_{X_\ell}(x_i)=\type_i$ for all $i\in[\ell]$. Then $\val_\alg^{X_\ell}(t)$ is the function $\varphi\colon \dom_{\type_1}\times\cdots\times \dom_{\type_\ell}\to \dom_\type$ defined recursively as follows, for all $a_1\in \dom_{\type_1},\dots,a_\ell\in \dom_{\type_\ell}$:
\begin{enumerate}
    \item If $t=x_i$ for some $i\in[\ell]$, then $\varphi(a_1,\dots,a_\ell)=a_i$.
    \item If $t=f[t_1,\dots,t_k]$, then $\varphi(a_1,\dots,a_\ell)=f(\val_\alg^{X_\ell}(t_1)(a_1,\dots,a_\ell),\dots,\val_\alg^{X_\ell}(t_k)(a_1,\dots,a_\ell)$.
\end{enumerate}
Note that this definition is consistent with the evaluation of terms without variables when $\ell=0$, i.e., $\val_\alg(t)=\val_\alg^\emptyset(t)$ for terms $t\in\terms^\type=\terms[\emptyset]^\type$. In the following, we will simply write $\val_\alg$ instead of $\val_\alg^{X_\ell}$ since $X$ will be understood from the context (or be sufficiently generic to be of lesser interest). In the following, we shall further generally assume that $X_\ell$ denotes an appropriate alphabet of \types-typed variables, if \types is understood from the context.

%A \emph{sample} of $\alg$ is a pair $(t,a)$, where $t \in \terms$ and $a = \valA(t)$.

\section{Learning from Algebraic Expressions}
\label{sec:algebraic}

We now formalise an algebraic framework for learning and grounding.
Let $\Sigma$ be a \type-typed alphabet. A \emph{template $\Sigma$-algebra (over \types)} is defined just like a $\Sigma$-algebra over \types, with the exception that $f_\alg$ is undefined for some of the symbols $f\in\Sigma$. These $f\in\Sigma$ represent the functions to be learned. An \emph{instance} of $\alg'$ is a $\Sigma$-algebra such that $f_{\alg'}=f_\alg$ for all $f\in\Sigma$ for which $f_\alg$ is defined.

Consider a template $\Sigma$-algebra as above, and assume that a type $\etype\in\types$ is designated as the \emph{evaluation type}, for which an additional linear order $\le$ on $\dom_\etype$ is specified, as well as a summation operation $\bigoplus $ that is well-defined on every finite subset of $\dom_\etype$. Thus, every finite subset of $\dom_\etype$ has uniquely defined maxima and minima. An \emph{example} is a pair $(\varphi,O)$ consisting of a term $\varphi\in\terms[X_\ell]^\etype$ and a set $O\subseteq\bigcup_{\type\in\types}\dom_\type$. Suppose that $\typemapping_{X_\ell}(x_i)=\type_i$. Then the set $\groundings O{X_\ell}$ of all \emph{groundings of $O$} is the set of all injective mappings $g\colon X_\ell\to O$ such that $g(x_i)\in \dom_{\type_i}$ for all $i\in[\ell]$. Given an instance $\alg'$ of $\alg$, the \emph{value} of an example $S=(\varphi,O)$ is
\begin{equation}
    \val_{\alg'}(S)=\opt_{g\in\groundings O{X_l}}\val_{\alg'}(\varphi)(g(x_1),\dots,g(x_\ell)) \enspace, \label{eq:value}
\end{equation}
where $\opt\in\{\min,\max\}$ is a specified optimisation criterion.

Given a finite set $\samp$ of examples, the learning goal is to find an instance $\alg'$ of $\alg$ that optimises the total value
\begin{equation}\label{eq:optimisation goal}
    \bigoplus_{S\in\samp}\val_{\alg'}(S)
\end{equation}
of all examples. Since the desired outcome $\alg'$ depend on which algebras $\alg'$ are admissible solutions, these must be made explicit when formalising the learning task. In particular, for every $f\colon\type_1\cdots\type_k\to\type$ in $\Sigma$ such that $f_{\alg}$ is undefined, we specify a set $\mathcal F_f$ of candidate functions $g\colon\dom_{\type_1}\times\cdots\times\dom_{\type_k}\to\dom_\type$ for $f_{\alg'}$.

\section{Example Instantiations of the Framework}
\label{sec:applications}
To illustrate how the algebraic framework of Section~\ref{sec:algebraic} can be instantiated to capture various scenarios for algorithmic learning, we now describe three application domains. 

\subsection{Regular languages\label{se:regular}}
The first instance we consider is the grammatical inference of regular languages from finite data. If the data constitutes a \emph{characteristic set}~\citep{Higuera:1997}, then a finite automaton for the target language can be computed in polynomial time in the size of that set~\citep{OncinaGarcia:1992}.
Intuitively, a characteristic set contains strings that represent every state and every transition of the unique minimal DFA for the target language. We discuss a similar setting using our framework, in order to illustrate how traditional automata learning fits into it. 

Throughout the remainder of this subsection, let $\Xi$ be a finite, totally ordered, set of symbols, and $L \subseteq \Xi^*$ be a fixed but arbitrary regular language.
Let the set of types be $\Gamma = \{\alpha, \beta\}$, the \types-typed alphabet be $\Sigma = \{\accept \colon \alpha \to \beta,\, \equ \colon \alpha \times \alpha \to \beta,\, \neg \colon \beta \to \beta\}$, and the algebra be $\alg = ((\dom_\gamma)_{\gamma \in \Gamma}, (f_\alg)_{f \in \Sigma})$, where $\dom_\alpha = \Xi^*$ and $\dom_\beta = \{\mathrm{false},\mathrm{true}\}$. The operation $\neg_\alg$ is logic negation. The operations $\accept_\alg$ and $ \equ_\alg$ are both undefined: The characteristic function $\accept_\alg\colon \Xi^* \to \{\mathrm{false},\mathrm{true}\}$ for the language $L$ is our learning target, and $\equ_\alg$ will be instantiated to model the Nerode congruence $\sim_L$ of $L$. Thus, admissible instantiations are algebras $\alg_L$ in which $\accept_{\alg'}$ is the characteristic function of a regular language $L$ and $\equ_{\alg'}$ the Nerode congruence of $L$.
 
Note that, in this instance of our framework, an example is a pair $(\varphi, O)$ where $\varphi$ can be assumed to be taken from $\{\accept[x], \neg[\accept[x]], \equ[x,y], \neg [\equ[x,y]]\}$ for variables $x,y$ and a set $O$ of strings. For the two positive terms $\accept[x]$ and $\equ[x,y]$, we furthermore restrict ourselves to examples such that the number of strings in $O$ is equal to the number of variables in $\varphi$, i.e., one or two.
%Together with the fact that every admissible $\equ{\alg'}$ is symmetric, this means that the optimisation in Equation~\eqref{eq:value} becomes trivial and can be disregarded. 
We let $\textrm{false} < \textrm{true}$. In Equation~\eqref{eq:value}, we let $\opt=\min$. (Note that this choice is important only for the examples using negation, because the positive examples have as many objects as variables, and $\equ_{\alg'}$ is required to be symmetric.)
Finally, for Equation~\eqref{eq:optimisation goal}, we take $\bigoplus$ to be $\bigwedge$, that is, logic conjunction.  Thus, the optimisation goal is to find a congruence relation that respects the partial information provided through the set of examples. 
 
\begin{comment}
For the correctness proof of our example learning algorithm, we need the following notion.
Let $M=(Q, \Xi, \delta, q_0, F)$ be a DFA. A set of strings $S \subseteq \Xi^*$  \emph{covers} $M$ if 
\begin{enumerate}
    \item $\emptystr \in S$ and
    \item for all $q\in Q$ and $\xi\in\Xi$ such that $\delta(q,\xi)$ is defined, there is a string $w\xi\in S$ such that $\delta_M(w) = q$.
    \end{enumerate}

The following lemma can be proved in a straightforward way:

\begin{observation}
\label{obs:isomorphic}
Let $M$ be a DFA over the alphabet $\Xi$, $N$ a subautomaton of $M$, and $S \subseteq \Xi^*$ a set of strings that covers $M$. If $S\subseteq\domain(\delta_N)$ then $N=M$.
\end{observation}
\end{comment}

The mapping $\mathit{pred} \colon \partition{\Xi^*}{\sim_L} \to \powerset{ \partition{\Xi^*}{\sim_L} \times \Xi}$ is defined as $\mathit{pred}(B) = \{ (D,\xi) \mid D\xi \subseteq B\}$ for every $B \in \partition{\Xi^*}{\sim_L}$. Thus, intuitively, in the minimal DFA for $L$, in which the live congruence classes of $\sim_L$ are the states, $\mathit{pred}(B)$ is the set of pairs of predecessor states of $B$, together with the transition symbols that take these to $B$. 
The congruence class $B$ is a \emph{convergence} if $|B\cap\{\emptystr\}| + \card{\pred{B}} > 1$.
Let $\samp$ be a set of examples and denote by $\strings{\samp}$ the set of all strings that occur in examples of the forms $( \accept[x], \{w\})$ and $( \equ[x,y], \{u, w\})$.  The set $\samp$ is \emph{sufficient} for $L$ if:
\begin{enumerate}
\item \label{conds:faithful} $\samp$ is \emph{faithful} (with respect to $L$), meaning that, for every example $(\varphi,O)\in\samp$, the set $O$ contains only live strings and there is a grounding $g\in\groundings O{X_l}$ such that $\val_{\alg_L}(g(x_1),\dots,g(x_\ell))=\mathrm{true}$ (where $\ell$ is such that $\varphi\in\terms[X_\ell]^\beta$).
\item\label{conds:congruence} For every live $B \in \partition{\Xi^*}{\sim_L}$,
\begin{enumerate}
    \item \label{cond:accepting} If $B\subseteq L$, then $( \accept[x], \{w\}) \in \samp$ for some $w\in B$.
    \item \label{cond:intersection} If $B$ is a convergence, then for every $(D,\xi) \in \pred{B}$,
    \[
    D\xi \cap \strings{\samp} \, =\,  D\xi \cap \prefixes{\strings{\samp}} \, \not = \,  \emptyset \enspace .
 \]

\end{enumerate}
\item \label{cond:smaller} For every $w \in \strings{\samp}$ that is not minimal (with respect to $\lex$) in $[w]^{\strings{\samp}}_{\sim_L}$ there is a $u$ in $[w]^{\strings{\samp}}_{\sim_L}$ such that $u\lex w$ and $( \equ[x,y], \{u, w\}) \in \samp$.
%\item \label{cond:distinguish} There is a sample $(\neg \equ[x,y], \{ \rho_B \mid B \in \partition{\Xi^*}{\sim_L}\}) \in \samp$.
\item \label{cond:distinguish} For every pair $B, B'\in\partition{\Xi^*}{\sim_L}$ of live congruence classes, $B\neq B'$ if and only if $\samp$ contains an example $(\neg \equ[x,y], O)$ such that $|O\cap B|=1=|O\cap B'|$.
\end{enumerate}

For the regular language $L = (ab)^*$, the set of examples in Figure~\ref{tab:regular} is sufficient and thus, as we shall see below, allows us to infer $L$. Note, however, that $L$ is not the only regular language which solves the optimisation problem given by~\eqref{eq:optimisation goal}. Another one would, for instance, be the language~$(a^+b)^*$ obtained from the DFA in Figure~\ref{tab:regular} by adding a transition which is not required by the examples but not forbidden either.
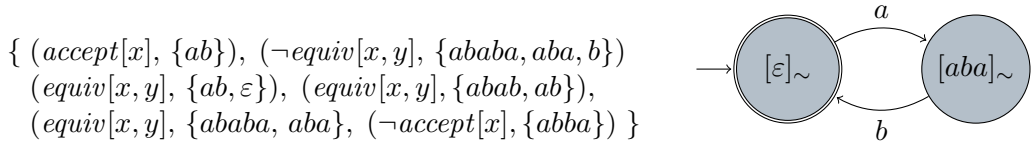
\begin{figure}[t]
\begin{minipage}{.5\textwidth}
\centering
\[
\begin{array}{c@{\;}l@{}c}
     \{ & (\accept[x],\, \{ab\}), \; (\neg \equ[x,y],\, \{ababa, aba, b\})\\
     & ( \equ[x,y],\, \{ab,\emptystr\}), \; ( \equ[x,y], \{abab,ab\}),\\ 
     & ( \equ[x,y],\, \{ababa,\, aba\}, \; (\neg \accept[x], \{abba\})   & \} \\
\end{array}
\]
\end{minipage}%
\begin{minipage}{.5\textwidth}
\centering
\begin{tikzpicture}[shorten >=1pt,node distance=2.5cm,on grid,auto]
  \tikzstyle{every state}=[fill={UmUBlue!35},minimum size=1.4cm]
  \node[state,initial,accepting,initial text={}]  (s_0)                 {$[\varepsilon]_\sim$};
  \node[state]                    (s_1) [right of=s_0]  {$[aba]_\sim$};

  \path[->]
  (s_0) edge [bend left] node {$a$}  (s_1)
  (s_1) edge   [bend left]  node {$b$}  (s_0);
\end{tikzpicture}
\end{minipage}
    \caption{A sufficient set of examples $\samp$ to allow the inference of the regular language $(ab)^*$, and the automaton derived from $\samp$. We note that the last three examples could be dropped, and the set would still be sufficient. }
    \label{tab:regular}
\end{figure}

\begin{comment}
As a consequence of conditions~\ref{conds:faithful}--\ref{cond:smaller}, we get the following:

\begin{observation}
\label{obs:covers}
If $\samp$ is a sufficient example for $L$ then $\prefixes{\strings{\samp}}$ covers $M_L$.
\end{observation}
\end{comment}

In the following, let $L$ be a regular language, $\samp$ a sufficient example for $L$, and $\sim_\samp$ the smallest equivalence relation on $\strings\samp$ such that $w\sim_\samp w'$ for all $w,w'\in\strings\samp$ such that $\samp$ contains the example $(\equ(x,y),\{w,w'\})$.

\begin{lemma}
\label{lm:liven}
If $B \in \partition{\Xi^*}{\sim_L}$ is live then for every $(D,\xi) \in \pred{B}$,  $D\xi \cap \prefixes{\samp} \not = \emptyset$. 
\end{lemma}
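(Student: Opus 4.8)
The plan is to unwind the definitions and show that the sufficiency conditions force enough strings into $\prefixes{\strings{\samp}}$ so that each predecessor pair $(D,\xi)$ of a live class $B$ is witnessed. Let $B \in \partition{\Xi^*}{\sim_L}$ be live and fix $(D,\xi) \in \pred{B}$, so that $D\xi \subseteq B$. I want to produce some string $w \in D\xi \cap \prefixes{\strings{\samp}}$.

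First I would dispose of the easy case: if $B$ is a convergence, then condition~\ref{cond:intersection} directly gives $D\xi \cap \prefixes{\strings{\samp}} \supseteq D\xi \cap \strings{\samp} \neq \emptyset$, and we are done immediately. So the content of the lemma is entirely in the case where $B$ is \emph{not} a convergence, i.e. $|B \cap \{\emptystr\}| + \card{\pred{B}} \le 1$. Since $(D,\xi) \in \pred{B}$ already contributes $1$ to $\card{\pred{B}}$, this forces $\emptystr \notin B$ and $\pred{B} = \{(D,\xi)\}$ — $B$ has a unique predecessor pair and is not the initial class.

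The key idea is then an inductive argument "walking backwards toward $\emptystr$." Since $B$ is live and $\emptystr \notin B$, there is a live string $v \in B$ with $v \neq \emptystr$; write $v = v'\xi'$. Then $[v']_{\sim_L}\,\xi' \subseteq B$ (because $\sim_L$ is a right congruence), so $([v']_{\sim_L}, \xi') \in \pred{B}$, and since $\pred{B}$ is the singleton $\{(D,\xi)\}$ we get $\xi' = \xi$ and $v' \in D$. Now $v' = v/\xi$ is live and strictly shorter than $v$. The plan is to set up an induction on string length (or on the well-founded order given by length) over live strings: one shows that every live string lies in $\prefixes{\strings{\samp}}$, from which $v = v'\xi \in D\xi \cap \prefixes{\strings{\samp}}$ follows once we know $v' \in \prefixes{\strings{\samp}}$ — and in fact we need $v \in \prefixes{\strings{\samp}}$, which is what the induction delivers. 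The base case $\emptystr$ is immediate since $\emptystr \in \prefixes{S}$ for any nonempty $S$ (and $\strings{\samp} \neq \emptyset$ by faithfulness applied to any example). For the inductive step on a live string $u\xi''$: its class $B'' = [u\xi'']_{\sim_L}$ is live, and $([u]_{\sim_L}, \xi'') \in \pred{B''}$; if $B''$ is a convergence we again invoke~\ref{cond:intersection} to land a string of $[u]_{\sim_L}\xi''$ in $\prefixes{\strings{\samp}}$, and if not, $B''$ has that pair as its unique predecessor, we recurse on the shorter live string $u$ to place $u$ (hence a prefix-closed chain up to $u\xi''$... ) — here one has to be a little careful that the recursion actually places the specific string $u\xi''$ rather than some other representative, so it is cleaner to strengthen the induction hypothesis to: \emph{for every live class $B'$ and every $(D',\xi') \in \pred{B'}$, $D'\xi' \cap \prefixes{\strings{\samp}} \neq \emptyset$}, i.e. prove the lemma itself by induction on the length of a shortest live string in $B'$.

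\textbf{Main obstacle.} The delicate point is the non-convergence case: we know $B$ has a unique predecessor pair $(D,\xi)$ and no string of $D\xi$ need appear literally in $\strings{\samp}$, so we genuinely must trace back through a chain $B = B_0, B_1, \dots$ of non-convergence classes until we hit either a convergence (where~\ref{cond:intersection} applies) or the initial class $[\emptystr]_{\sim_L}$ (where~\ref{conds:faithful}/\ref{cond:smaller} gives a witness in $\strings{\samp}$, since $\strings{\samp}$ is nonempty and its shortest element's prefixes include $\emptystr$). Making this chain well-defined and finite requires the observation that each step strictly decreases the length of a shortest live representative — which holds precisely because a non-convergence class is reached by a unique transition from a unique predecessor, so the shortest live string in $B$ is obtained by appending $\xi$ to a live string in $D$ that is strictly shorter. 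Getting this monotonicity statement exactly right, and checking that the terminal case really does put a prefix of some string in $\strings{\samp}$ into $D\xi$ (as opposed to merely into $\prefixes{\strings{\samp}}$ at a higher level), is the crux; the rest is bookkeeping with the right-congruence property of $\sim_L$.
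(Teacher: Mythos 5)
Your reduction of the problem is fine as far as it goes: the convergence case is indeed immediate from Condition~\ref{cond:intersection}, and for a non-convergence $B$ with $(D,\xi)\in\pred{B}$ you correctly obtain $\pred{B}=\{(D,\xi)\}$ and $\emptystr\notin B$ (so in fact $B=D\xi$). But the induction you then set up walks in the wrong direction, and the point you yourself flag as ``the crux'' is a genuine, unresolvable gap in that scheme. Your backward chain terminates at an anchor \emph{behind} $B$ (a convergence, or the class $\block{\emptystr}{\sim_L}$), and the inductive hypothesis only hands you a string of the predecessor class $D$ lying in $\prefixes{\strings{\samp}}$. To conclude anything about $D\xi$ you would have to append $\xi$ (and, from a deeper anchor, a whole suffix) and stay inside $\prefixes{\strings{\samp}}$ --- but $\prefixes{\strings{\samp}}$ is closed under taking prefixes, not under extending by letters, and nothing in Conditions~\ref{conds:faithful}--\ref{cond:distinguish} forces a sample string to continue forward out of $D$ with the letter $\xi$. (The terminal case makes this vivid: $\emptystr\in\prefixes{\strings{\samp}}$ always holds, yet tells you nothing about which letters any sample string starts with.) The auxiliary claim you float first, that every live string lies in $\prefixes{\strings{\samp}}$, is outright false, since $\samp$ is finite while $L$ generally has infinitely many live strings; the weakened ``lemma-as-IH'' version does not rescue the step, for the reason just given. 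A further symptom is that your argument uses liveness of $B$ only to pick a representative, whereas liveness is exactly what must supply the witness.

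The paper's proof goes the other way. Using liveness, choose $v\in B$ and a \emph{shortest} $w$ such that $\block{vw}{\sim_L}$ is a convergence or a subset of $L$; Condition~\ref{cond:accepting} or~\ref{cond:intersection} then guarantees a full sample string $s\in\strings{\samp}$ sitting in that class (in the convergence case, even in $\block{vw_1}{\sim_L}\xi_k$ for the last step $\xi_k$ of $w$). By minimality of $w$, every intermediate class on the path from $B$ is a non-convergence, hence has a unique predecessor pair and does not contain $\emptystr$, so $s$ can be pulled \emph{back} letter by letter: each truncation of $s$ is forced into the corresponding class on the path, and truncation is exactly the operation under which $\prefixes{\strings{\samp}}$ is closed. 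After $|w|$ steps one lands on a prefix $u$ of $s$ with $u\in B=D\xi$, i.e.\ $u\in D\xi\cap\prefixes{\strings{\samp}}$. If you want to phrase this as an induction, the right measure is the forward distance from $B$ to the nearest convergence or accepting class, not the length of a shortest live representative of $B$; with that change your chain argument becomes essentially the paper's proof.
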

\begin{proof}
 If $B$ is a convergence, the statement of Lemma~\ref{lm:liven} is immediately fulfilled by Condition~\ref{cond:intersection}. Assume then that this is not the case, and choose some arbitrary string $v\in B$. We cannot have $B=\block{\emptystr}{\sim_L}$ because then the assumption that $B$ is not a convergence implies that $\pred B=\emptyset$. Since $B$ is live, there is a string $w$ such that $[vw]_{\sim_L}$ is a convergence or a subset of $L$. Let $w$ be such a string of minimal length (this can be $0$). Because of the minimality of $w$ there must be a string $u \in B$ such that $uw \in \strings{\samp}$. Since $\emptystr\notin B$, we have $u=u_0\xi$ for some $u_0\in\prefixes\samp$ and $\xi \in \Xi$. Then $\pred{B} = \{([u_0]_{\sim_L}, \xi)\}$, which completes the proof.
\end{proof}

\begin{comment}
\begin{lemma}
\label{lm:liven}
\hl{Version 2:} A class $B \in \partition{\Xi^*}{\sim_L}$ is live if and only if $B \cap \prefixes{\samp} \not = \emptyset$. 
\end{lemma}
\begin{proof}
For the `if' direction, we notice that the property of being live is closed under the prefix operation, and Condition~\ref{conds:faithful} ensures that every string in $\strings{\samp}$ is live. For the `only-if' direction we observe that if $B$ is a convergence or a subset of $L$, then $B \cap \strings{\samp} \not = \emptyset$. Assume then that this is not the case. Then, there must be a string $w$ of minimal length such that $[\rho_B w]_{\sim_L}$ is a convergence or a subset of $L$, and because of the minimality of $w$ there must be a string $u \in B$ such that $uw \in \strings{\samp}$.
\end{proof}
\end{comment}

\begin{definition}
\label{def:simrc}
Let $S\subseteq\Xi^*$, and let $\sim$ be an equivalence relation on $S$. The \emph{right completion} of $\sim$ is the smallest equivalence relation ${\simrc}\supseteq{\sim}$ on $\prefixes S$ such that, for all $w\xi,w'\xi\in\prefixes S$ (where $w,w'\in\Xi^*$ and $\xi\in\Xi$), if $w\simrc w'$ then $w\xi\simrc w'\xi$.
\end{definition}

Note that, for finite $S$, the set $E$ of equivalence classes of $\simrc$ (and thus $\simrc$ itself) can be computed from $\sim$ in a standard manner: initially, $E=\partition{\prefixes S}\sim$. Now, while there exists distinct $C, C'\in E$ such that there are $w\xi\in C$ and $w'\xi \in C'$ such that $w \simrc w'$, we let $E \leftarrow (E\setminus\{C,C'\})\cup \{C\cup C'\}$.

We have the following lemma:

\begin{lemma}
\label{lm:simrc}
${\simrc_\samp} = {\sim_L}\cap(\prefixes\samp\times\prefixes\samp)$.
\end{lemma}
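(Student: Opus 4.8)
The plan is to prove the two inclusions separately. For the inclusion ${\simrc_\samp} \subseteq {\sim_L}\cap(\prefixes\samp\times\prefixes\samp)$, I would argue that $\sim_L$ restricted to $\prefixes\samp\times\prefixes\samp$ is itself an equivalence relation containing $\sim_\samp$ and closed under the right-completion rule, so by minimality of $\simrc_\samp$ the inclusion follows. Concretely: first, $\sim_\samp \subseteq \sim_L$ because whenever $(\equ[x,y],\{w,w'\})\in\samp$, faithfulness (Condition~\ref{conds:faithful}) together with the fact that $\equ_{\alg_L}$ models the Nerode congruence forces $w\sim_L w'$; since $\sim_L$ is an equivalence relation and $\sim_\samp$ is the smallest one generated by these pairs on $\strings\samp$, we get $\sim_\samp\subseteq\sim_L$. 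Second, $\sim_L$ is trivially closed under the right-completion rule: if $w\sim_L w'$ then $w\xi\sim_L w'\xi$ by definition of the Nerode congruence. Hence ${\sim_L}\cap(\prefixes\samp\times\prefixes\samp)$ is a right-completion-closed equivalence relation on $\prefixes\samp$ extending $\sim_\samp$, and minimality of $\simrc_\samp$ gives ${\simrc_\samp}\subseteq{\sim_L}\cap(\prefixes\samp\times\prefixes\samp)$.

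The substantive direction is the reverse inclusion ${\sim_L}\cap(\prefixes\samp\times\prefixes\samp) \subseteq {\simrc_\samp}$: if $u,w\in\prefixes\samp$ and $u\sim_L w$, we must show $u\simrc_\samp w$. I would proceed by induction on, say, the combined length $|u|+|w|$, or more naturally on a well-founded measure tied to how far $u$ and $w$ sit from being "witnessed" strings in $\strings\samp$. The idea is that $[u]_{\sim_L}=[w]_{\sim_L}=:B$ is a live congruence class (prefixes of live strings are live, and faithfulness guarantees everything in $\strings\samp$ is live), and $B$ is a convergence whenever it has more than one incoming edge or contains $\emptystr$. I would split into cases according to whether $u$ (resp.\ $w$) is empty or of the form $u_0\xi$. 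When both are nonempty, write $u=u_0\xi$ and $w=w_0\eta$; using Lemma~\ref{lm:liven} and Condition~\ref{cond:intersection} (applied to the convergence $B$ with the predecessor pairs $([u_0]_{\sim_L},\xi)$ and $([w_0]_{\sim_L},\eta)$) one can locate strings in $\strings\samp$ witnessing these predecessors, and then Condition~\ref{cond:smaller} together with the transitivity of $\sim_\samp$ lets us connect $u$ and $w$ through the $\lex$-minimal representative of their $\sim_L$-class inside $\strings\samp$. Crucially, $\xi=\eta$ must hold because the transition into $B$ from a fixed predecessor state is deterministic — or, if the predecessors differ, the right-completion rule propagates $u_0\simrc_\samp w_0$ (obtained by induction, since $u_0,w_0$ are shorter and still in $\prefixes\samp$) down to $u_0\xi\simrc_\samp w_0\eta$ only when $\xi=\eta$, so I need to check that two distinct incoming symbols to the same state cannot both be forced by the data. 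Determinism of $M_L$ handles this: each $(D,\xi)\in\pred B$ corresponds to a unique edge, and $D\xi\subseteq B$ means $\xi$ is determined by $D$ and $B$.

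I expect the main obstacle to be the bookkeeping in this induction: carefully picking the right well-founded measure so that the recursive appeal to $u_0\sim_L w_0\Rightarrow u_0\simrc_\samp w_0$ is legitimate (the predecessor strings $u_0,w_0$ need to lie in $\prefixes\samp$, which follows since $u,w\in\prefixes\samp$ and $\prefixes\samp$ is prefix-closed), and threading together the three combinatorial conditions (Conditions~\ref{cond:intersection}, \ref{cond:smaller}, and the determinism underlying $\pred{\cdot}$) to show that whenever $u\sim_L w$ there is an actual chain of $\simrc_\samp$-steps linking them. A clean way to organize this is to first prove the auxiliary claim that every live $B$ has a designated $\lex$-minimal representative $\rho_B\in\strings\samp$ and that every $v\in B\cap\prefixes\samp$ satisfies $v\simrc_\samp\rho_B$, which reduces the lemma to connecting arbitrary class members to their representative; this claim is itself proved by induction on $|v|$ using Lemma~\ref{lm:liven}, Condition~\ref{cond:intersection}, and Condition~\ref{cond:smaller}. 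Once that auxiliary claim is in place, the reverse inclusion is immediate by transitivity, and the lemma follows.
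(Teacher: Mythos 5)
Your first inclusion is fine and is essentially the paper's argument (monotonicity of right completion plus the fact that $\sim_L$ is a right congruence). The genuine gap is in the reverse inclusion, in exactly the case that carries the proof: $u=u_0\xi$, $w=w_0\eta$ with distinct predecessor pairs $(\block{u_0}{\sim_L},\xi)\neq(\block{w_0}{\sim_L},\eta)$. Your claim that determinism of $M_L$ forces $\xi=\eta$ ("$D\xi\subseteq B$ means $\xi$ is determined by $D$ and $B$") is false: determinism constrains outgoing edges, not incoming ones, and a class can have several incoming pairs with different symbols (for $L=\{a,b\}$ the accepting class has both $(\block{\emptystr}{\sim_L},a)$ and $(\block{\emptystr}{\sim_L},b)$ in its $\pred{\cdot}$-set). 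Moreover, in that situation the inductive appeal is also unavailable, since $u_0\xi\sim_L w_0\eta$ does not imply $u_0\sim_L w_0$ (the Nerode congruence is not left-cancellable). The way this case actually closes -- and the way the paper closes it -- is: distinct predecessor pairs make $B$ a convergence, and then the \emph{equality} $D\xi\cap\strings{\samp}=D\xi\cap\prefixes{\strings{\samp}}$ in Condition~\ref{cond:intersection} forces $u$ and $w$ \emph{themselves} into $\strings{\samp}$ (they lie in $D\xi\cap\prefixes{\samp}$), after which Condition~\ref{cond:smaller} chains them by $\sim_\samp$. Your convergence branch invokes only the non-emptiness part of Condition~\ref{cond:intersection} ("locate strings witnessing these predecessors"), but having some witness of the class in $\strings{\samp}$ gives no $\simrc_\samp$-link from $u$ or $w$ to that witness -- that link is precisely what is being proved. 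The induction-plus-right-completion step is legitimate only in the complementary case where the two predecessor pairs coincide, i.e.\ $\xi=\eta$ and $u_0\sim_L w_0$; there your argument works.

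Your proposed auxiliary claim is also false as stated: a live class need not contain any string of $\strings{\samp}$, so no representative $\rho_B\in\strings{\samp}$ exists in general. For instance, for $L=\{ab\}$ a sufficient $\samp$ can have $\strings{\samp}=\{ab\}$ (negated examples do not contribute to $\strings{\samp}$), and $\block{a}{\sim_L}$ is live, meets $\prefixes{\samp}$, but is disjoint from $\strings{\samp}$; only accepting classes (Condition~\ref{cond:accepting}) and convergences (Condition~\ref{cond:intersection}) are guaranteed representatives. The paper instead runs a shortest-counterexample argument whose case split is on membership of $w,w'$ in $\strings{\samp}$: either both are in $\strings{\samp}$, and Condition~\ref{cond:smaller} chains them, or both are outside, in which case their class cannot be a convergence, so its $\pred{\cdot}$-set is a singleton, giving the shared last letter and $\sim_L$-equivalent predecessors needed to apply minimality and the closure rule of Definition~\ref{def:simrc}. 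So while your overall architecture matches the paper, the decisive case in your write-up rests on an incorrect determinism claim, and the convergence case is left open.
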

\begin{proof}
Since right completion is monotone and ${\sim_\samp}\subseteq{\sim_L}$, we have ${\simrc_\samp} \subseteq {\simrc_L}={\sim_L}$, where the equality holds because $\sim_L$ is a right congruence. Here, ${\simrc_\samp} \subseteq \prefixes\samp\times\prefixes\samp$, and thus ${\simrc_\samp} = {\simrc_\samp} \cap(\prefixes\samp\times\prefixes\samp)\subseteq {\sim_L}\cap(\prefixes\samp\times\prefixes\samp)$.
%Assume that $w, w' \in \prefixes{\samp}$ are such that $w \not \sim_L w'$. Then Condition~\ref{cond:smaller} implies $w \not \sim_\samp^\star w'$. This means that ${\sim_\samp^\star} \subseteq {\sim_L}$. Since the closure procedure in Definition~\ref{def:simrc} only extends $\simrc$ to contains strings $(ua,u'a)$ if $(u,u')$ is already in ${\simrc}$, ${\simrc} \subseteq {\sim_L}$ also after termination, so $w \not \simrc w'$.

For the other inclusion, assume that $w, w' \in \prefixes{\samp}$ are such that $w \sim_L w'$. Due to Condition~\ref{cond:intersection}, $w$ and $w'$ must either both be in $\strings{\samp}$ or both be in $\prefixes{\samp} \setminus \strings{\samp}$.  If they are in $\strings{\samp}$, then by Condition~\ref{cond:smaller}, $w \sim_\samp^\star w'$.  If they are in $\prefixes{\samp} \setminus \strings{\samp}$, then they must be of the form $u\xi, u'\xi$ for some $u,u' \in \prefixes{\samp}$ with $u \sim_L u'$, or else $\block w{\sim_L}$ would constitute a convergence and thus $w,w'\in\strings{\samp}$. Assume that $w, w'$ are of minimal length with this property. Then $u\simrc_\samp u'$, but this cannot be because $w,w'\in\prefixes\samp$ and thus $w\simrc_\samp w'$ by Definition~\ref{def:simrc}.
\end{proof}

\begin{definition}[DFA construction]
\label{def:aut}
Let $S \subseteq \Xi^*$, let $\sim$ be an equivalence relation on~$S$, and let $F \subseteq S$. The DFA $\aut{S,\sim,F}$ is given by $(\Xi, Q, \delta, q_0, Q_F)$ where
\begin{itemize}
\item $Q = \partition{\prefixes S}{\sim}$,
\item $\delta$ is the partial transition function such that $\delta(B,\xi)$ is defined and equal to $B'\in Q$ if $w\xi\in B'$ for some $w\in B$,
\item $q_0 = [\emptystr]^{\prefixes S}_\sim$, and
\item $Q_F = \{B \in Q \mid B \cap F \neq \emptyset\}$.
\end{itemize}
\end{definition}

With the preceding definition, we can now use $\samp$ to build a DFA that recognises $L$:

\begin{lemma}
\label{lm:aut}
Let $F_\samp=\{w \in \strings{\samp} \mid (\accept[x],\{w\}) \in \samp\}$. Then the DFA $M_\samp=\aut{\prefixes{\samp},\simrc_\samp,F_\samp}$ is isomorphic to $M_L$.
\end{lemma}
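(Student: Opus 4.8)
The goal is to exhibit an isomorphism between $M_\samp=\aut{\prefixes\samp,\simrc_\samp,F_\samp}$ and the canonical DFA $M_L$. The natural candidate map is $h\colon Q(M_\samp)\to Q(M_L)$ sending an equivalence class $\block w{\simrc_\samp}\in\partition{\prefixes\samp}{\simrc_\samp}$ to the congruence class $\block w{\sim_L}\in\partition{\Xi^*}{\sim_L}$. By Lemma~\ref{lm:simrc}, ${\simrc_\samp}={\sim_L}\cap(\prefixes\samp\times\prefixes\samp)$, so $h$ is well-defined and injective: $\block w{\simrc_\samp}=\block{w'}{\simrc_\samp}$ iff $w\sim_L w'$. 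The first task is therefore to check that $h$ lands in the state set of $M_L$, i.e.\ that every $\block w{\sim_L}$ with $w\in\prefixes\samp$ is \emph{live}; this is immediate since prefixes of live strings are live and Condition~\ref{conds:faithful} guarantees every string in $\strings\samp$ (hence every prefix thereof) is live.

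**Surjectivity.** Next I would show $h$ is onto the live classes of $\sim_L$. Let $B$ be a live congruence class. If $B=\block\emptystr{\sim_L}$ it is hit by $\block\emptystr{\simrc_\samp}$. Otherwise, walking backwards along a shortest path from $q_0$ to $B$ in $M_L$ and repeatedly invoking Lemma~\ref{lm:liven} (each predecessor pair $(D,\xi)\in\pred B$ has $D\xi\cap\prefixes\samp\neq\emptyset$, and $D$ is again live, being a predecessor of a live state), one produces a string $w\in B\cap\prefixes\samp$, so $h(\block w{\simrc_\samp})=B$. This is essentially the ``$\prefixes\samp$ covers $M_L$'' statement that the commented-out Observation~\ref{obs:covers} alludes to; I would phrase it as an induction on the length of the shortest accepting-reachable path.

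**Transition and initial/final compatibility.** Then I verify $h$ is a DFA homomorphism. For initial states: $h(q_0(M_\samp))=h(\block\emptystr{\simrc_\samp})=\block\emptystr{\sim_L}=q_0(M_L)$. For transitions: if $\delta_{M_\samp}(\block w{\simrc_\samp},\xi)$ is defined it equals $\block{v\xi}{\simrc_\samp}$ for some $v\simrc_\samp w$, and since $v\sim_L w$ we get $h$ of it equal to $\block{v\xi}{\sim_L}=\block{w\xi}{\sim_L}=\delta_{M_L}(h(\block w{\simrc_\samp}),\xi)$; conversely, whenever $w\in\prefixes\samp$ and $w\xi\in\prefixes\samp$ the transition on the $M_\samp$ side is defined, so $h$ preserves definedness in the sense needed for isomorphism after we know both automata have the same reachable transition set. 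For final states I would check $\block w{\simrc_\samp}\in Q_F(M_\samp)$ iff $\block w{\sim_L}\subseteq L$: the forward direction is direct from $F_\samp\subseteq L$; the reverse uses Condition~\ref{cond:accepting}, which forces some accepting string of $B$ into $\samp$ when $B\subseteq L$, together with the fact (again Lemma~\ref{lm:simrc}) that $\simrc_\samp$-classes refine into single $\sim_L$-classes so a witness in $F_\samp$ actually sits in the right state.

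**Main obstacle.** The delicate point is matching the \emph{transition relations exactly}, not just up to homomorphism: $M_\samp$ a priori has only those transitions that $\prefixes\samp$ witnesses, and one must argue it is missing none of $M_L$'s. This is where the ``convergence'' machinery and Condition~\ref{cond:intersection} do the real work — a state $B$ with more than one incoming edge (or incoming edge plus being initial) is a convergence, and Condition~\ref{cond:intersection} puts a representative of every $D\xi$ into $\strings\samp$ (and hence the transition $D\xrightarrow{\xi}B$ into $M_\samp$); states that are \emph{not} convergences have a unique predecessor pair, handled by the Lemma~\ref{lm:liven} argument. I expect assembling this into a clean statement ``$M_\samp$ and $M_L$ have, via $h$, the same set of transitions among reachable states'' to be the crux; once that is in place, $h$ is a bijective homomorphism whose inverse is also a homomorphism, hence an isomorphism, and since $M_L$ is the canonical (minimal, trim) DFA for $L$ this certifies $L(M_\samp)=L$.
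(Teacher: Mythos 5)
Your proposal is correct and follows essentially the same route as the paper's proof: the map you call $h$ is exactly the paper's $\pi$, with well-definedness and injectivity from Lemma~\ref{lm:simrc}, transition compatibility and definedness from Lemma~\ref{lm:liven}, the initial state matched via $\emptystr$, and final states matched via faithfulness and Condition~\ref{cond:accepting}. The only difference is presentational: the ``crux'' you flag (that $M_\samp$ misses no transition of $M_L$, split by convergence vs.\ non-convergence) is already packaged wholesale in Lemma~\ref{lm:liven}, which the paper simply cites, and your explicit backwards-induction for surjectivity is left implicit there.
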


\begin{proof}

Let $M_L=(\Xi,Q',\delta',q_0',F')$ and $M_\samp=(\Xi,Q,\delta,q_0,F)$. By Lemma~\ref{lm:simrc}, ${\simrc_{\samp}}\subseteq{\sim_L}$, so every state $q\in Q$ is a subset of some (by Lemma~\ref{lm:liven} live) congruence class in $\partition{\Xi^*}{\sim_L}$, i.e., of a state $q'\in Q'$. Also by Lemma~\ref{lm:simrc}, $w\not\simrc_\samp w'$ implies $w\not\sim_Lw'$. Hence, the mapping $\pi\colon Q\to Q'$ such that $q\subseteq\pi(q)$ for all $q\in Q$ is well defined and injective.

Lemma~\ref{lm:liven} together with the constructions of $\delta$ and $\delta'$ (via Definition~\ref{def:aut} and the Myhill-Nerode construction, respectively), yield the following for all $q\in Q$ and $\xi\in\Xi$: $\delta(q,\xi)$ is defined if and only if $\delta'(\pi(q),\xi)$ is defined, and in this case it holds that $\delta'(\pi(q),\xi)=\pi(\delta(q,\xi))$. 
Moreover, $\pi(q_0)=q_0'$ because $\emptystr\in q_0\cap q_0'$. Finally, a state $q\in Q$ is in $F$ if and only if $q\cap F_\samp\neq\emptyset$, which by the definition of $F_\samp$ and Condition~\ref{cond:accepting} is the case if and only if $\pi(q)\cap L\neq\emptyset$. Again by the fact that $M_L$ results from the Myhill-Nerode construction, the latter is equivalent to $\pi(q)\in F'$. Hence, altogether, $M_\samp$ and $M_L$ are isomorphic.
\end{proof}

Note that the construction of $M_\samp$ does not depend on the existence of distinguishing examples according to Condition~\ref{cond:distinguish}. While this means that $M_\samp$ would stay unchanged even if those examples would be absent, the theorem we prove now would fail. 

\begin{theorem}\label{th:minimal}
Let $L$ be a regular language and $M_L=(\Xi,Q,\delta,q_0,F)$. Then
$\alg_L$ maximizes~\eqref{eq:optimisation goal}. Furthermore, it is minimal in the following sense: if $M=(\Xi,Q',\delta',q_0',F')$ is another DFA such that $\alg_{L(M)}$ maximizes~\eqref{eq:optimisation goal}, then there is an injective mapping $\pi\colon Q\to Q'$ such that $\pi(q_0)=q_0'$, $\delta'(\pi(q),\xi)=\pi(\delta(q,\xi))$ for all $(q,\xi)\in\domain(\delta)$, and $\pi(F)\subseteq F'$. In particular, $L\subseteq L(M')$.
\end{theorem}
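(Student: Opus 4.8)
The plan is as follows. Since $\bigoplus$ is conjunction on the two‑element chain $\mathrm{false}<\mathrm{true}$, an instance maximizes~\eqref{eq:optimisation goal} precisely when it makes every example in $\samp$ evaluate to $\mathrm{true}$. So the first assertion reduces to checking that $\val_{\alg_L}(S)=\mathrm{true}$ for each $S\in\samp$, which I would settle by a short case analysis on the shape of $\varphi\in\{\accept[x],\neg\accept[x],\equ[x,y],\neg\equ[x,y]\}$: for the positive terms the number of objects equals the number of variables and $\equ_{\alg_L}={\sim_L}$ is symmetric, so faithfulness (Condition~\ref{conds:faithful}) yields the value $\mathrm{true}$ at once, and for the negated terms the claim follows from faithfulness together with the setup of this instance ($\opt=\min$, and the agreed form of the examples). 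A by‑product is that the optimum of~\eqref{eq:optimisation goal} equals $\mathrm{true}$, so for the second assertion the hypothesis ``$\alg_{L(M)}$ maximizes~\eqref{eq:optimisation goal}'' becomes simply ``$\val_{\alg_{L(M)}}(S)=\mathrm{true}$ for all $S\in\samp$''.

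For the minimality assertion I would construct the embedding $\pi$ in the spirit of the map $\pi$ from the proof of Lemma~\ref{lm:aut}, but now with $M$ as target and exploiting that $\alg_{L(M)}$ satisfies every example (and treating, without loss of generality, the case where $M$ is minimal). First, arguing exactly as in the opening paragraph of the proof of Lemma~\ref{lm:simrc} — right completion is monotone, $\sim_{L(M)}$ is a right congruence, and each $(\equ[x,y],\{u,w\})\in\samp$ forces $u\sim_{L(M)}w$ — one obtains ${\simrc_\samp}\subseteq{\sim_{L(M)}}\cap(\prefixes\samp\times\prefixes\samp)$. By Lemma~\ref{lm:aut} it then suffices to embed $M_\samp=\aut{\prefixes\samp,\simrc_\samp,F_\samp}$ into $M$. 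I would define $\pi$ by sending a state $C$ of $M_\samp$ to $\delta_M(\rho_C)$ for a representative $\rho_C\in C\subseteq\prefixes\samp$; such representatives exist by Lemma~\ref{lm:liven} and the construction of $M_\samp$. Independence of the choice of $\rho_C$ and definedness of $\delta_M(\rho_C)$ follow from ${\simrc_\samp}\subseteq{\sim_{L(M)}}$ and from Condition~\ref{cond:accepting}, which guarantees that every live state of $M_\samp$ is reached along some $w$ with $(\accept[x],\{w\})\in\samp$, forcing $w\in L(M)$ and hence $\delta_M$ defined along $w$.

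Next I would verify, along the lines of the proof of Lemma~\ref{lm:aut}, the homomorphism identity $\delta'(\pi(C),\xi)=\pi(\delta_{M_\samp}(C,\xi))$ on $\domain(\delta_{M_\samp})$, the equality $\pi([\emptystr]_{\simrc_\samp})=q_0'$, and the inclusion $\pi(F_{M_\samp})\subseteq F'$ (if $w\in C$ with $(\accept[x],\{w\})\in\samp$ then $w\in L(M)$, so $\pi(C)=\delta_M(w)\in F'$; by Condition~\ref{cond:accepting} these are exactly the final states of $M_\samp$). Injectivity of $\pi$ is the one place where Condition~\ref{cond:distinguish} is used, and — as noted before the theorem — the claim fails without it: distinct states of $M_\samp$ correspond under Lemma~\ref{lm:aut} to distinct live $\sim_L$‑classes $B\neq B'$, Condition~\ref{cond:distinguish} supplies an example $(\neg\equ[x,y],O)\in\samp$ with a unique $u\in O\cap B$ and a unique $w\in O\cap B'$, and since $\alg_{L(M)}$ makes this example true (with $\opt=\min$) we get $u\not\sim_{L(M)}w$, so $B$ and $B'$ are mapped to different states of $M$. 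Transporting $\pi$ back across the isomorphism $M_L\cong M_\samp$ gives the asserted $\pi\colon Q\to Q'$. Finally, for $w\in L$ we have $\delta_{M_L}(w)\in F$, and iterating the homomorphism identity from $\pi(q_0)=q_0'$ gives $\delta_M(w)=\pi(\delta_{M_L}(w))\in\pi(F)\subseteq F'$, i.e.\ $w\in L(M)$; hence $L\subseteq L(M)$.

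The hard part, I expect, is the well‑definedness of $\pi$ and the homomorphism identity: one must track carefully which strings of $\prefixes\samp$ witness which transitions and which congruence classes, interleaving Lemmas~\ref{lm:liven}, \ref{lm:simrc} and~\ref{lm:aut} with Conditions~\ref{cond:accepting} and~\ref{cond:distinguish}. The subtlety is that a state of $M_\samp$ need not meet $\strings\samp$, only $\prefixes\samp$, so the representative of a state, the string certifying its acceptance, and the string certifying its separation from another state may all be distinct and must be reconciled through the right‑congruence/right‑completion machinery of Lemma~\ref{lm:simrc}.
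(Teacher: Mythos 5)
Your overall route is the same as the paper's: reduce to $M_\samp=\aut{\prefixes\samp,\simrc_\samp,F_\samp}$ via Lemma~\ref{lm:aut}, establish ${\simrc_\samp}\subseteq{\sim_{L(M)}}$ by the monotonicity/right-congruence argument of Lemma~\ref{lm:simrc}, define $\pi$ via representatives, get injectivity from Condition~\ref{cond:distinguish}, the initial and final states from $\emptystr$ and Condition~\ref{cond:accepting}, and the homomorphism identity from the Myhill--Nerode-style definitions of the transition functions (your WLOG restriction to a minimal $M$ matches the reduction the paper itself performs, so I do not count that separately). However, one step is justified incorrectly. Since you map directly into the \emph{partial} DFA $M$, well-definedness of $\pi(C)=\delta_M(\rho_C)$ needs $\delta_M(\rho_C)$ to be defined, i.e.\ $\rho_C$ must be live with respect to $L(M)$; you derive this from Condition~\ref{cond:accepting} by asserting that every live state of $M_\samp$ is reached along some $w$ with $(\accept[x],\{w\})\in\samp$. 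Condition~\ref{cond:accepting} only guarantees an accept example for each \emph{accepting} class of $\sim_L$; it does not guarantee that every state of $M_\samp$ contains a prefix of such a string. For $L=(ab)^*$, for instance, the sole accept example could be $(\accept[x],\{\emptystr\})$, and then the state $\block{aba}{\simrc_\samp}$ contains no prefix of any accepted sample string. Nor can you appeal to $L\subseteq L(M)$ at this point, since that inclusion is only obtained at the end by iterating the homomorphism.

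The gap is localized and fixable, and the paper's proof shows one way to avoid it: it embeds into the canonical \emph{total} DFA $M_0$ of $L(M)$, whose transition function is everywhere defined, so no definedness question arises; only at the end does it use that the images $\pi(q)$ are live in $M_0$ (because the states $q$ are live in $M_\samp$, $\pi$ commutes with transitions, and $\pi(F)\subseteq F'$) to conclude that they are not the sink and hence are states of $M$. Alternatively, you can repair your direct construction as follows: from $C$ follow a path in $M_\samp$ to a final state; using ${\simrc_\samp}\subseteq{\sim_{L(M)}}$ and the fact that $\sim_{L(M)}$ is a right congruence, transfer this path to your representative, obtaining an extension $\rho_C v$ that is $\sim_{L(M)}$-equivalent to some $w$ with $(\accept[x],\{w\})\in\samp$; since that example forces $w\in L(M)$, we get $\rho_C v\in L(M)$, so $\rho_C$ is live in $L(M)$ and $\delta_M(\rho_C)$ is defined. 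With this repair, the remainder of your argument (independence of the representative, injectivity, finals, initial state, the homomorphism identity, and $L\subseteq L(M)$) goes through essentially as in the paper.
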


\begin{proof}
It is straightforward to verify that the instantiation $\alg_L$ ensures that all examples attain the value $\mathrm{true}$. Hence, $\alg_L$ maximizes~\eqref{eq:optimisation goal}. 

For the rest of the proof, let $M_\samp=\aut{\prefixes\samp,\simrc_\samp,F_\samp}$. By Lemma~\ref{lm:aut} we prove the statement of the theorem for $M_\samp=(\Xi,Q,\delta,q_0,F)$ instead of $M_L$, i.e., the states are subsets of $\prefixes\samp$ instead of congruence classes of $\partition{\Xi^*}{\sim_L}$. Let $M$ be another DFA, as in the statement of the theorem. Instead of $M$, we shall for the time being consider the canonical \emph{total} DFA $M_0=(\Xi,Q_0',\delta_0',q_0',F')$ recognising $L(M)$. Thus, we can assume that $M$ is obtained from $M_0$ by removing the sink state if there is one, and $M=M_0$ otherwise.

Every state $q\in Q$ is a subset of a state in $Q_0'$ because
\begin{enumerate*}[label=(\alph*)]
\item for all $w,w'\in \prefixes\samp$ the implications $w\sim_\samp w'\Rightarrow (\equ(x,y),\{w,w'\})\in\samp\Rightarrow w\sim_{L(M_0)}w'$ hold (the latter since $M$ maximises \eqref{eq:optimisation goal}),
\item $\sim_{L(M_0)}$ is a right congruence and
\item $\simrc_\samp$ is the smallest right equivalence on $\prefixes\samp$ that contains $\sim_\samp$.
\end{enumerate*}

Let thus $\pi\colon Q\to Q_0'$ be the mapping such that $q\subseteq\pi(q)$ for all $q\in Q$. Then $\pi$ is injective: if there would be two distinct states $q,q'\in Q$ such that $q'\subseteq\pi(q)$ then $M_0$ (and thus $M$) would fail to satisfy one of the examples $(\neg\equ(x,y),O)$ contained in $\samp$ owing to Condition~\ref{cond:distinguish}, and would thus not maximise \eqref{eq:optimisation goal}. Furthermore, by Condition~\ref{cond:accepting} we have $\pi(q)\in F_\samp$ for all $q\in F$, and since $\emptystr\in q_0\cap q_0'$, we also have $\pi(q_0)=q_0'$.

Now recall that $\delta_0'$ is the transition function defined by the Myhill-Nerode construction. Thus, for all $q\in Q_0$, $w\in\Xi^*$, and $\xi\in\Xi$ it holds that $\delta_0'(q,a)=q'$ if $wa\in q'$. Since this is just how $\delta$ is defined, it follows that $\delta_0(\pi(q),\xi)=\pi(\delta(q,\xi))$ for all $(q,\xi)\in\domain(\delta)$. In particular, all states of the form $\pi(q)$, $q\in Q$, are live in $M_0$ because they are so in $M_\samp$. Consequently, $\pi$ actually maps $Q$ to $Q'$, which completes the proof.
\end{proof}

\subsection{Picture Languages}
The second example instantiation of the algebraic framework allows us to infer picture languages of the type described by~\cite{Drewes:06}. 
These language are generated by grammatical systems whose rules transform and combine images to more complex ones: A \emph{collage grammar} is a pair $G=(g,\calg)$ consisting of
\begin{enumerate*}[label=(\arabic*)]
\item a regular tree grammar $g$ such that $L(g)\subseteq\terms^p$ for some type $p$ and a $\{p\}$-typed alphabet $\Sigma$, and
\item a so-called collage algebra $\calg=(\collages,(F_{\calg})_{F\in\Sigma})$.
\end{enumerate*}
The components of such a collage algebra \calg are given as follows:
\begin{itemize}
\item \collages is the set of all compact subsets of $\real^2$, here referred to as \emph{pictures}.\footnote{A subset of $\real^2$ is compact if it is bounded (a subset of a disc) and contains all limits of Cauchy sequences in it.}
\item The interpretation of every $F\colon p$ in $\Sigma$ is an element $F_{\calg}$ of \collages.
\item The interpretation of every $F\colon p^k\to p$ with $k>1$, is given by $k$ affine transformation $f_1,\dots,f_k$ of $\real^2$, as follows: $F_{\calg}(C_1,\dots,C_k)=\bigcup_{i\in[k]}f_i(C_i)$. Thus, $F$ transforms its $i$th argument by $f_i$ and returns the union of the resulting $k$ pictures. We also denote such a function $F_{\calg}$ by $\cop{f_1\cdots f_k}$.
\end{itemize}
The language generated by $G$ is $L(G)=\{\val_{\calg}(t)\mid t\in L(g)\}$.

An example of a collage grammar $G=(g,\calg)$ is depicted in Figure~\ref{fi:collage grammar}.
\begin{figure}
\newlength{\picwidth}
\setlength{\picwidth}{1.5cm}
\newcommand{\rul}[3][]{%
  \def\tempa{#1}%
  \def\tempb{}%
  \ifx\tempa\tempb%
    \raisebox{-.5\picwidth + .5ex}{\includegraphics[width=\picwidth]{treebag/grid/chair/ps/#2-eps-converted-to.pdf}}\ \to \ \raisebox{-.5\picwidth + .5ex}{\includegraphics[width=\picwidth]{treebag/grid/chair/ps/#3-eps-converted-to.pdf}}%
  \else%
    \raisebox{-.5\picwidth + .5ex}{\includegraphics[width=\picwidth]{treebag/grid/chair/ps/#2-eps-converted-to.pdf}}\ \to \ \raisebox{-.5\picwidth + .5ex}{\includegraphics[width=\picwidth]{treebag/grid/chair/ps/#3-eps-converted-to.pdf}}%
    \quad\rule[-.5\picwidth]{.5pt}{1.1\picwidth}\quad\raisebox{-.5\picwidth + .5ex}{\includegraphics[width=\picwidth]{treebag/grid/chair/ps/#1-eps-converted-to.pdf}}%
  \fi%
}
\hspace*{\fill}$
\begin{array}{@{}c@{}}
\rul S{rhsS}\\\\[0pt]
\rul[rhsA2]A{rhsA1}\\\\[0pt]
\rul[rhsB2]B{rhsB1}
\end{array}
$
\hfill
$\begin{array}{@{}c@{}}\includegraphics[width=6cm]{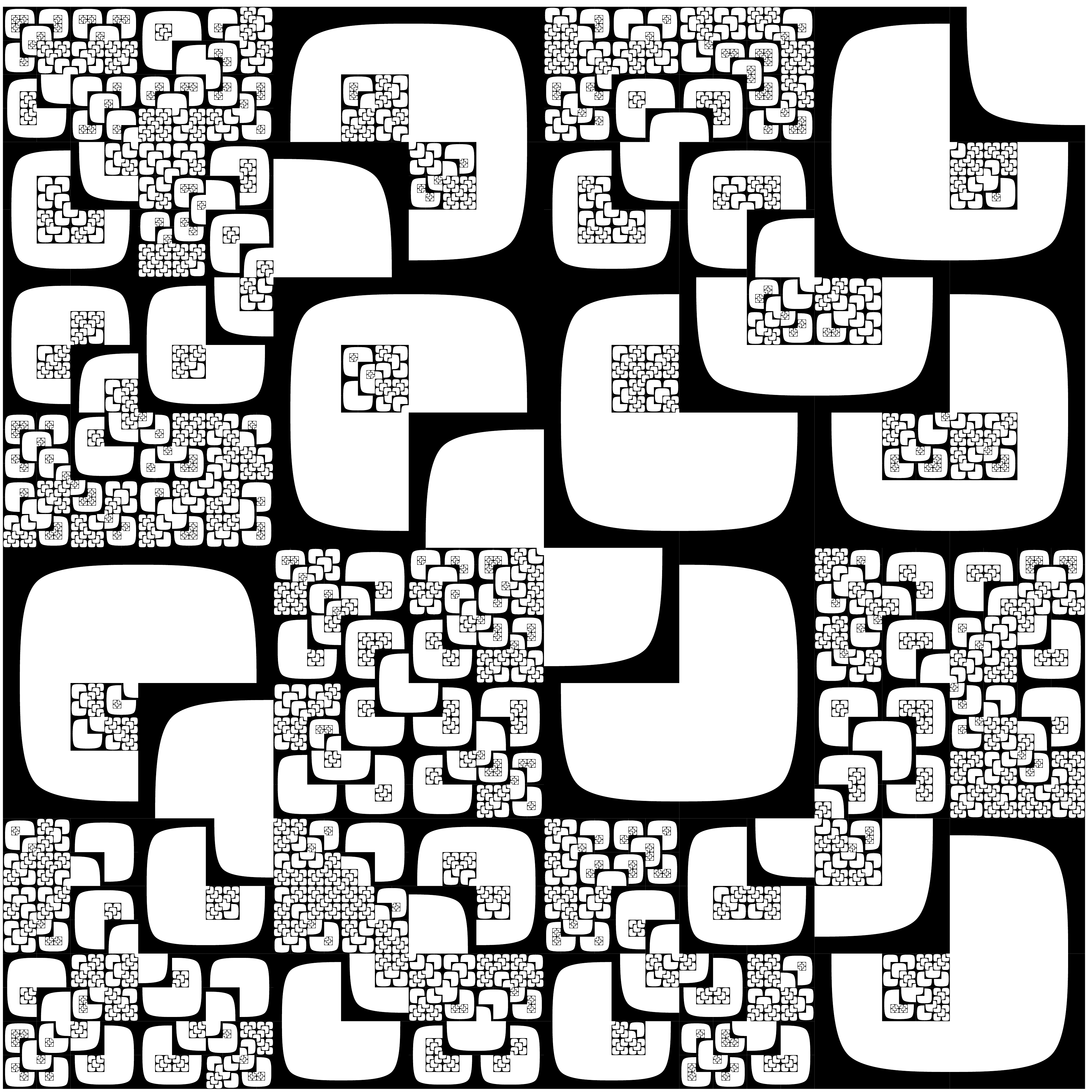}\end{array}$
\hspace*{\fill}

\caption{Graphical illustration of a collage grammar (left) and one of the generated pictures (right). The rules of the regular tree grammar are $S\to F[A,A,A,A]$ (top), $A\to F[A,A,A,A] \mid F[B,B,B,B]$ (middle), and $B \to G[C, S] \mid C$ (bottom), separating alternative right-hand sides for the same left-hand side by `$\mid$'. To obtain the graphical illustration of the rules and, at the same time, indicate the transformations constituting $F$ and $G$ (and the picture $C$), the algebra was extended to interpret $S$, $A$, and $B$ as squares with an inscribed nonterminal name and \textsf L-shape making orientation unambiguous. (Colours just serve better visual distinction.) The initial nonterminal is $S$.\label{fi:collage grammar}}
\end{figure}%
  Suppose we want to learn $G$ from examples. We obviously have to learn both $g$ and the interpretation of the symbols appearing in the terms in $L(g)$, i.e., we have to learn~\calg. Learning $g$ is an interesting task that has been considered in many research articles. However, since this problem is similar to Section~\ref{se:regular}, we do not consider it here, focusing instead on the second component. We discuss one way in which our framework can be used to formalise and study this learning task.

  Consider the operation $F_\calg=\cop{f_1,f_2,f_3,f_4}$ in our example. To learn it, we need to learn each of the affine transformations $f_i$ defining it. By the definition of affine transformations, $f_i(x)=M_ix+b_i$ where $M_i$ is a $2\times2$ matrix with real coefficients and $b\in\real^2$. This is just the type of mapping a single linear layer of a neural network $\mathcal N$ can implement. Hence, we propose to use our framework to study how this can be done. For the learning, we may assume that all operations except $F_\alg$ are known, i.e., we consider the template algebra $\calg'$ which is equal to $\calg$ with the exception that $F_{\calg'}$ is undefined. We can use the regular tree grammar $g$ to generate examples from which the network is supposed to learn $F_\alg$. Intuitively, each example is a pair consisting of a term and its evaluation according to $\calg$. However, this does not yet fulfil the requirements of our general definition. What is missing is a way to measure how good a job $\mathcal N$ does. In other words, we need a distance measure on $\collages$. For this, we may for example use the well-known Hausdorff measure~\citep{Hausdorff:1918}, the more recent Earth Mover's distance by~\cite{RubnerEtAl:2000}, or, if we restrict ourselves to pictures consisting of finite unions of simple geometric shapes, the area of the symmetric difference between pictures. Let $\delta$ denote such a distance measure, and let $\calg_\delta$ be $\calg$ extended by this operation, which is of type $\delta\colon\collages\collages\to\real$. (For simplicity, we blur the distinction between the symbol $\delta$ and its interpretation.) Now, to stay within our framework, we can use examples of the form $(\delta[x_1,t],\val_{\calg}(t))$, where $t\in L(g)$. In fact, as long as we are not concerned with learning $g$ as well, the requirement that $t\in L(g)$ is superfluous. Furthermore, we can enrich $\calg'$ with further shapes than only $C$, making it possible to use examples that simplify learning.

  Thus, to experiment with how well $\mathcal N$ can learn $F_\calg$, we may generate a corpus of such examples and train $\mathcal N$, using as a loss function the average of the losses of individual examples. Let $\calg_{\mathcal N}$ denote the algebra obtained from $\calg'$ by interpreting $F$ according to $\mathcal N$. The loss of a single example $(\delta[x_1,t],C)$ is then $\delta(C,\val_{\calg_{\mathcal N}}(t))$. An example of a set of four examples using triangles and squares as basic pictures is shown in Figure~\ref{fi:collage examples} at the top.%
\begin{figure}[t!]
\scriptsize
\newcommand{\scalefactor}{.05}%
\newcommand{\sq}{\mathit{sq}}%
\newcommand{\tri}{\mathit{tri}}%
\resizebox{\linewidth}{!}{%
$\begin{array}{@{}c@{\qquad}c@{\qquad}c@{\qquad}c@{}}
F[\tri,\sq,\tri,\sq] & F[\tri,\tri,\sq,\sq] & F[\sq,\tri,\sq,\tri] & F[\begin{array}[t]{@{}l@{}}F[\tri,\sq,\tri,\sq],\\\tri,\tri,\sq]\end{array}\\
\includegraphics[scale=\scalefactor]{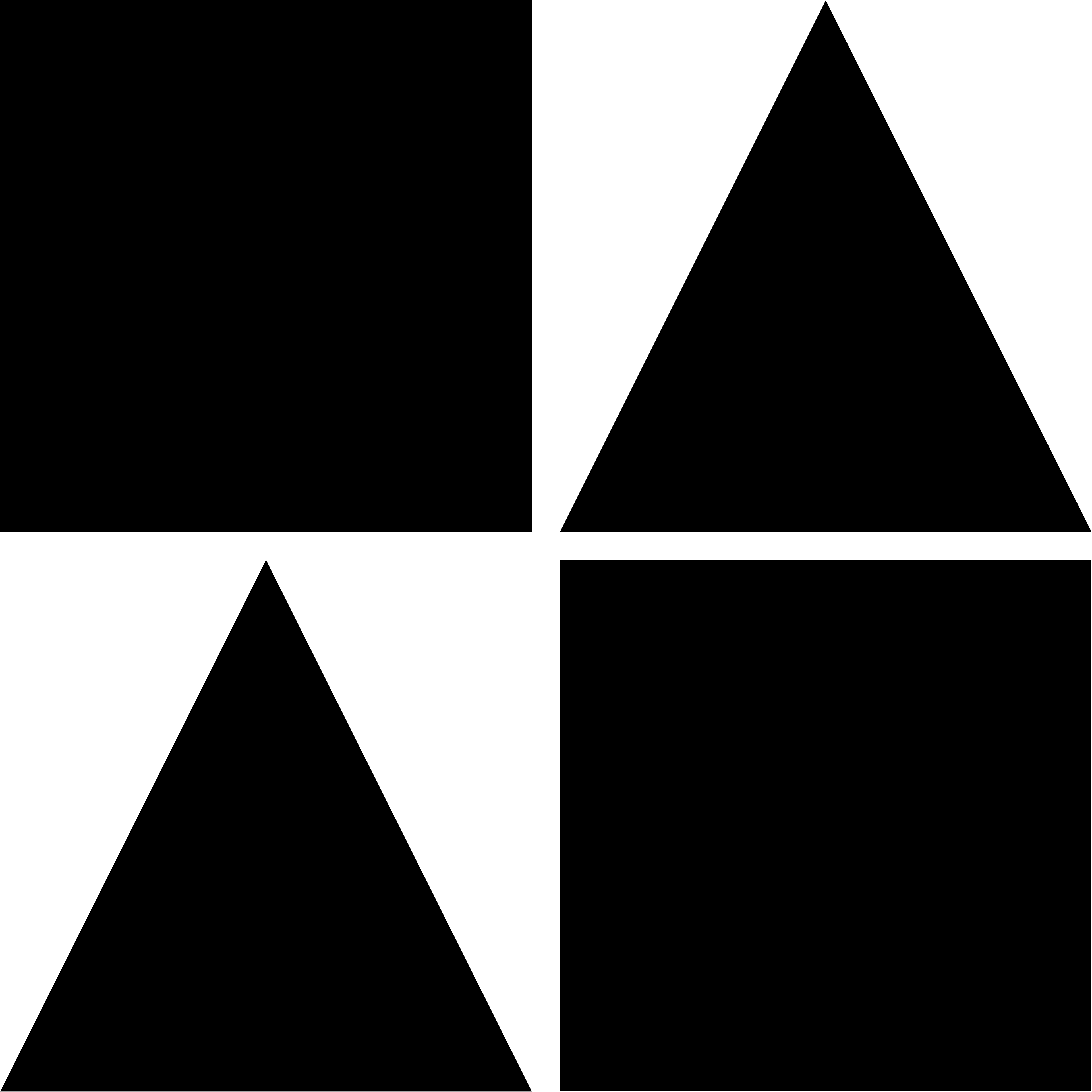} & \includegraphics[scale=\scalefactor]{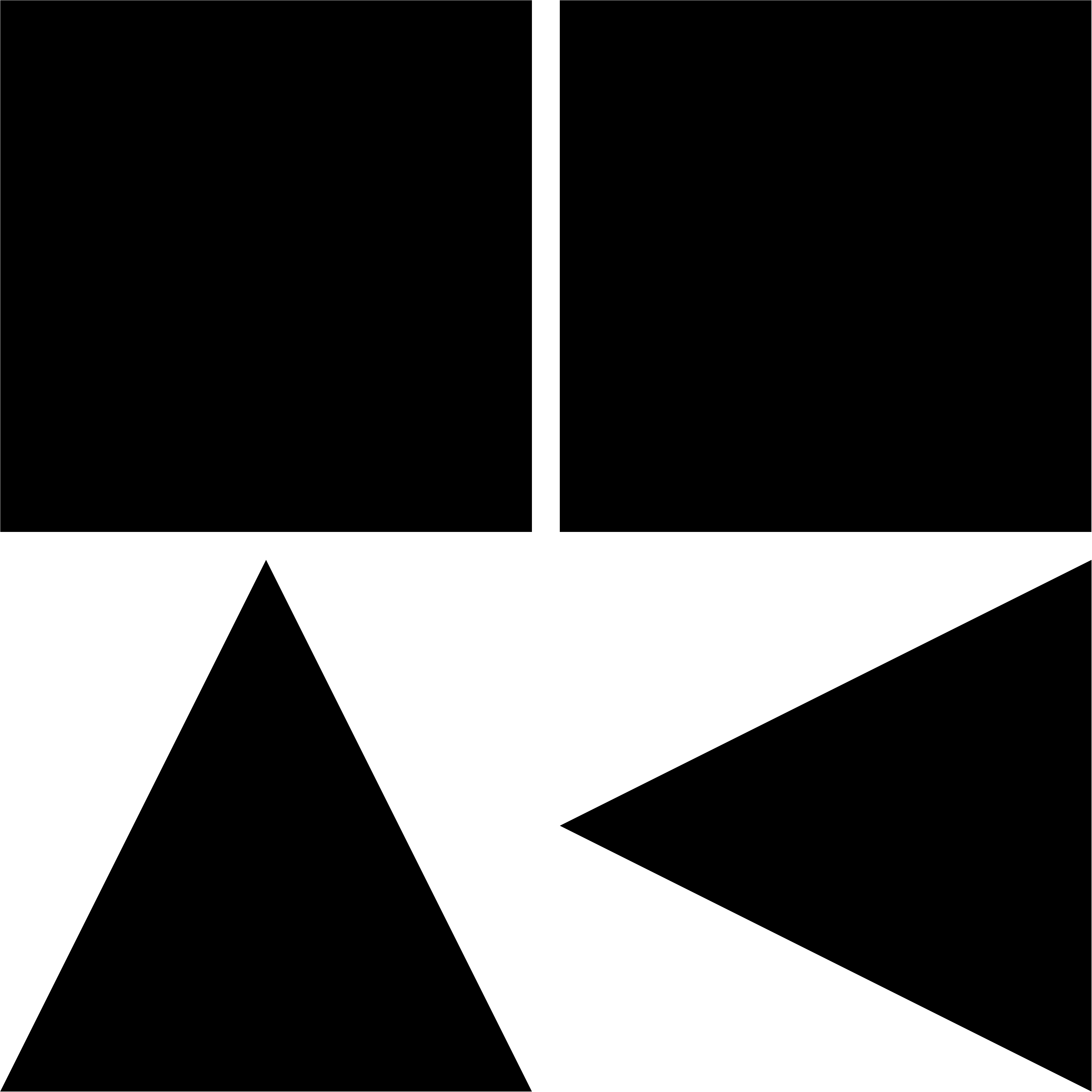} & \includegraphics[scale=\scalefactor]{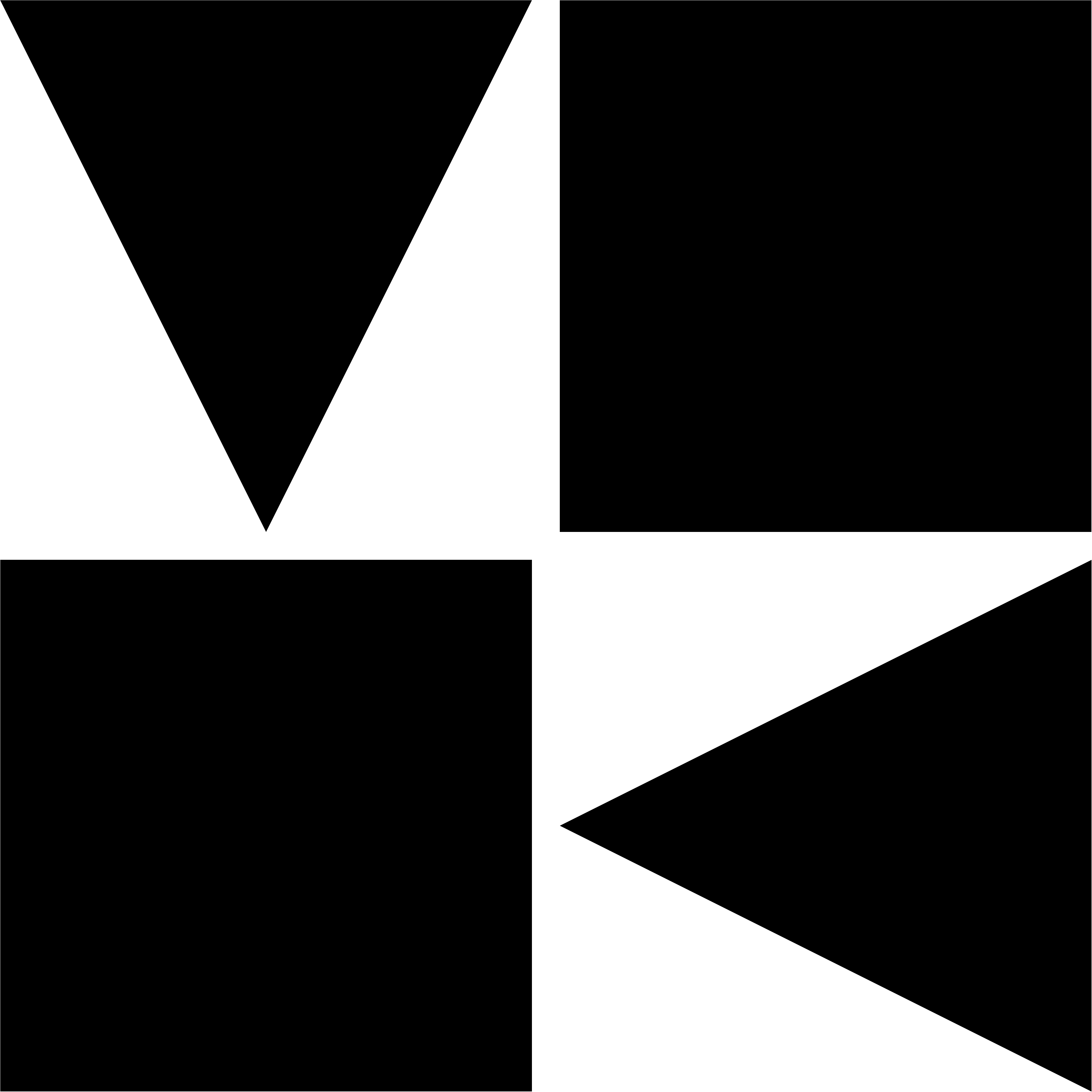} & \includegraphics[scale=\scalefactor]{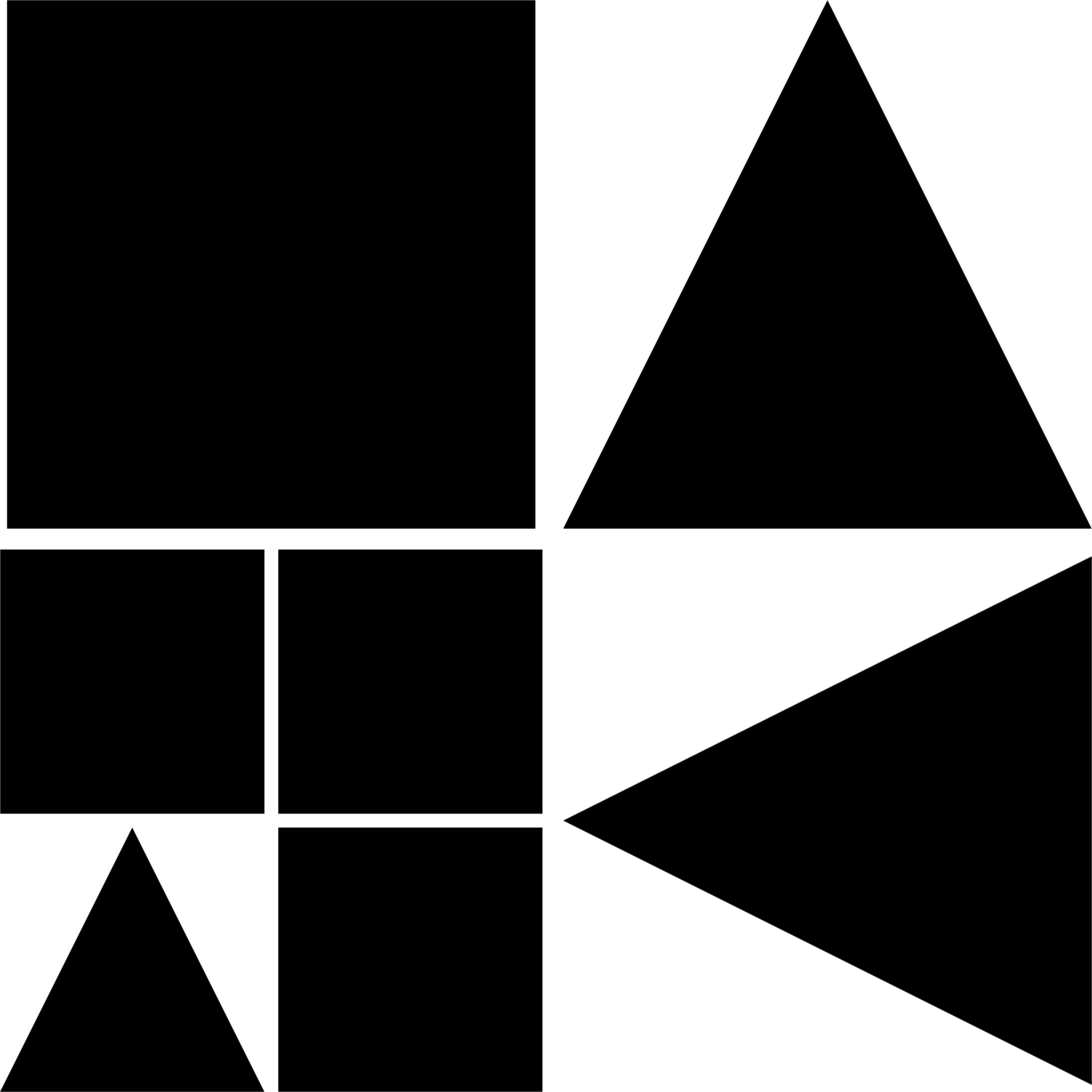}\\\\[0pt]
\includegraphics[scale=\scalefactor]{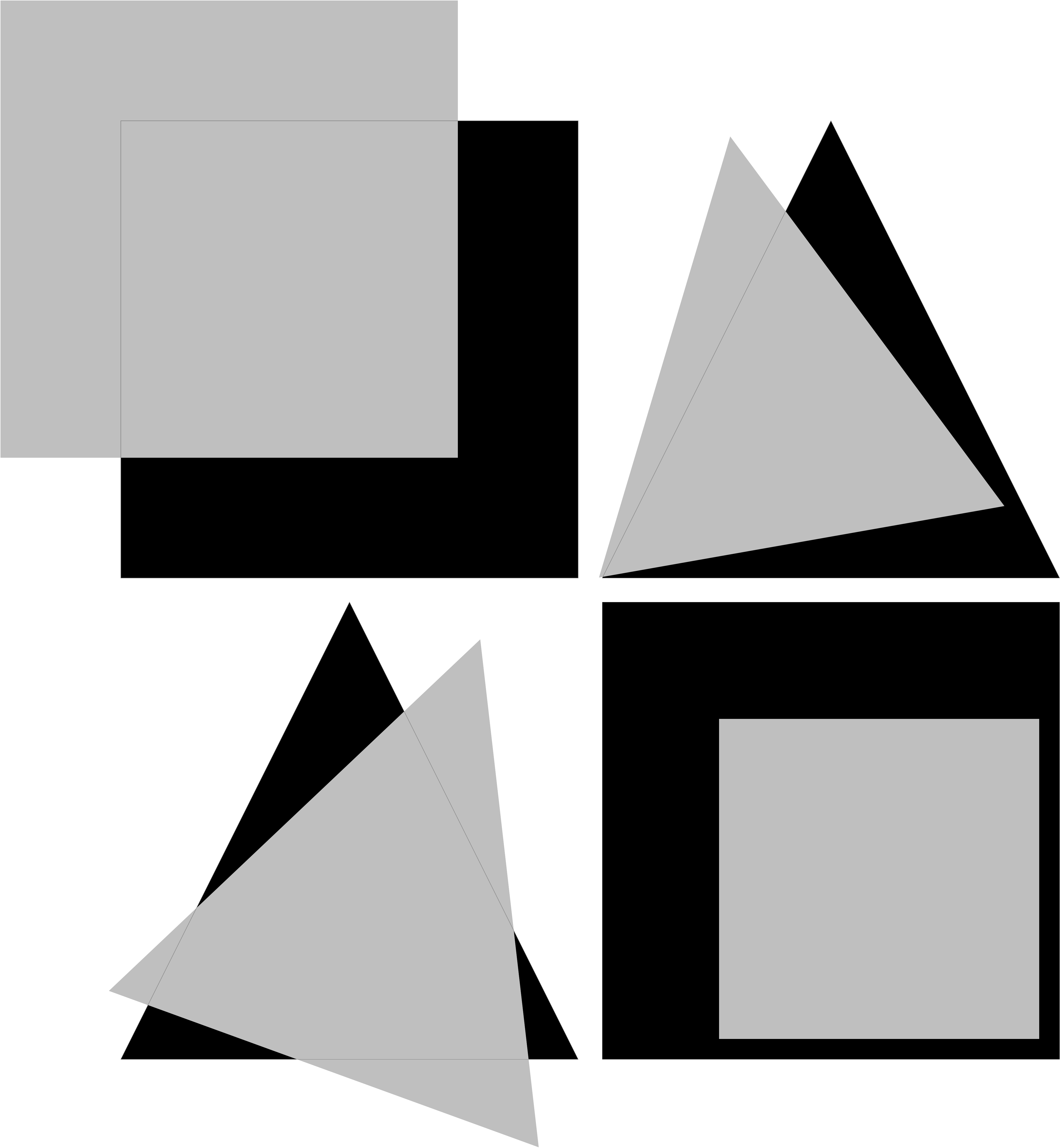} & \includegraphics[scale=\scalefactor]{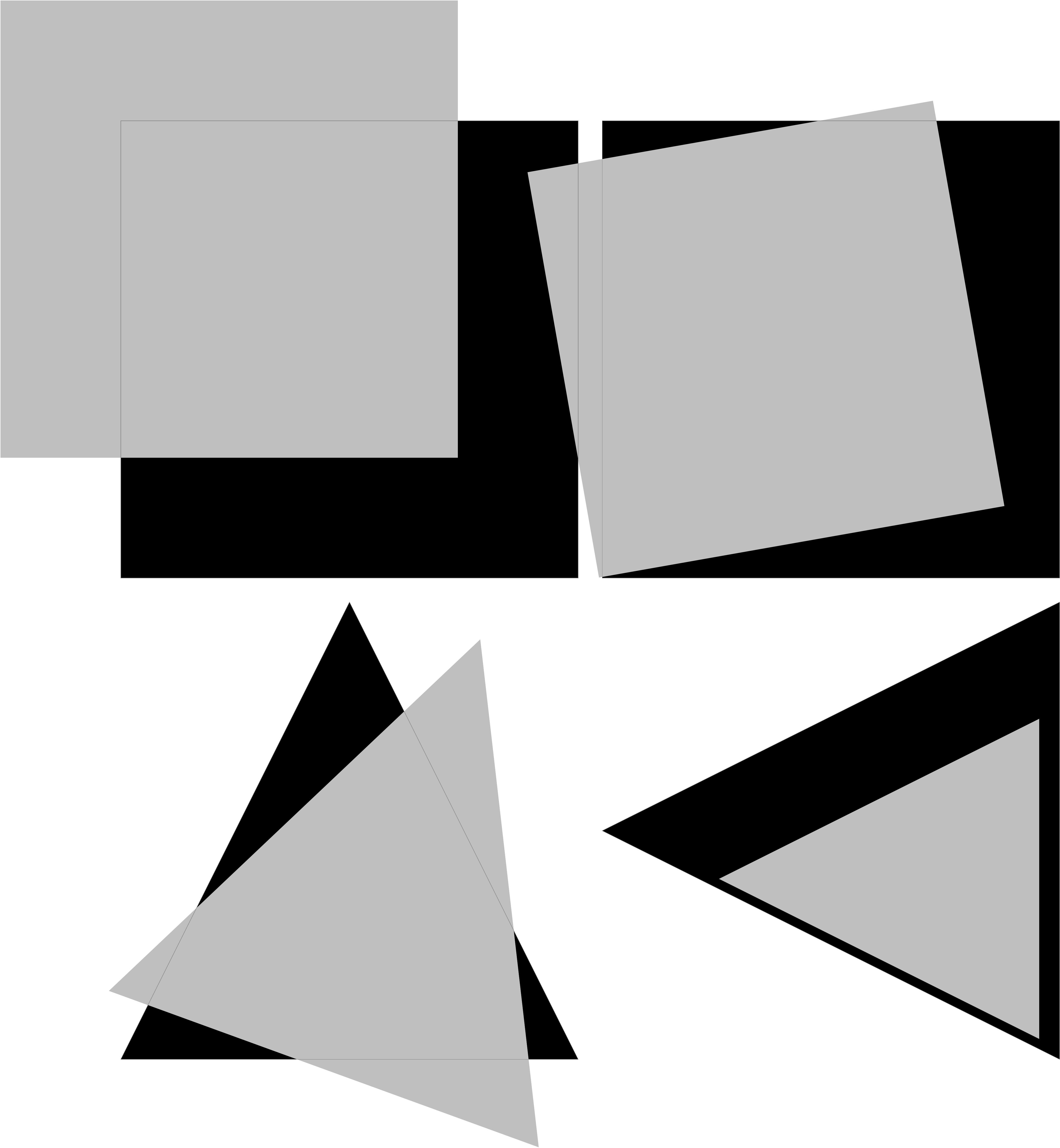} & \includegraphics[scale=\scalefactor]{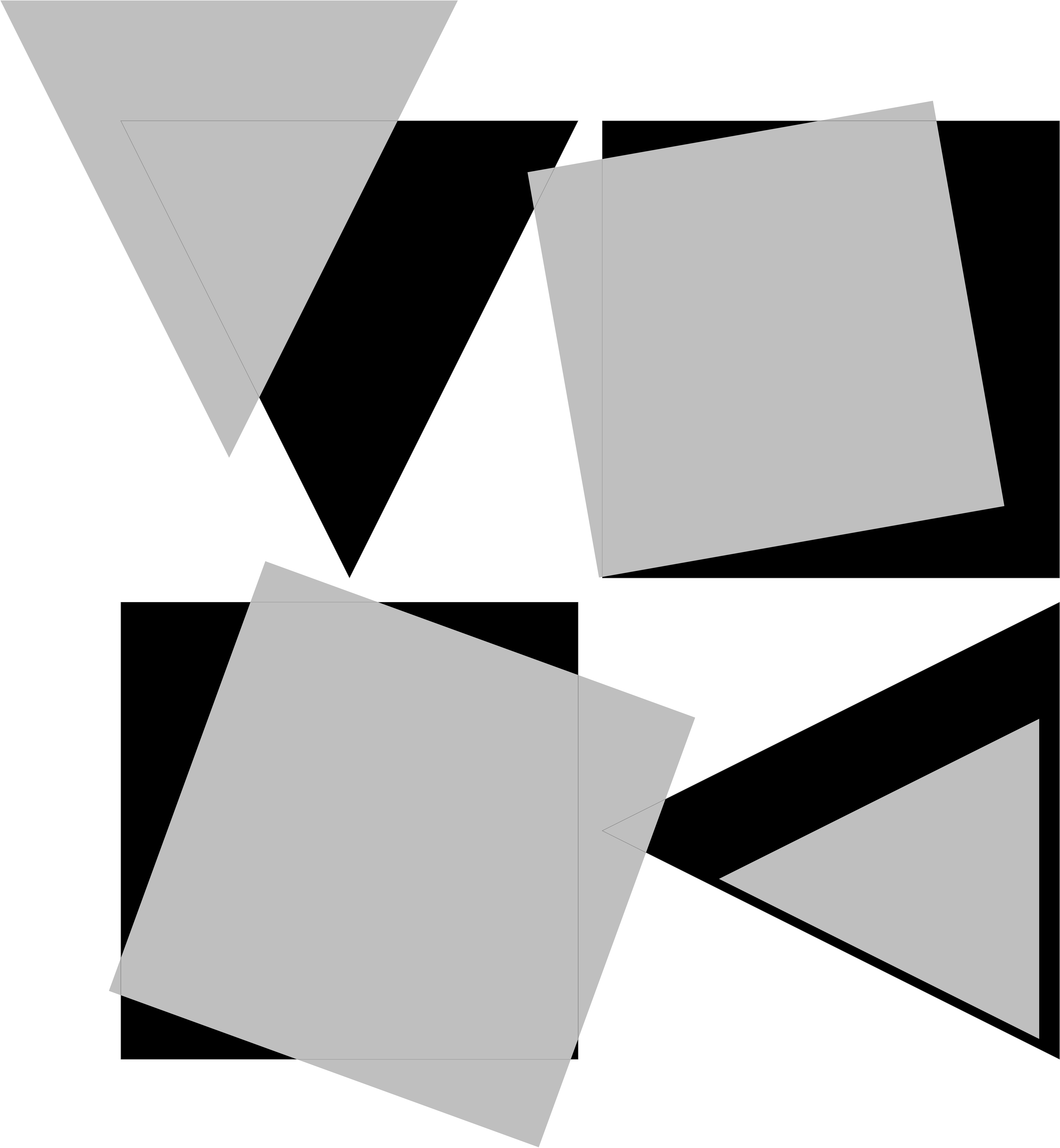} & \includegraphics[scale=\scalefactor]{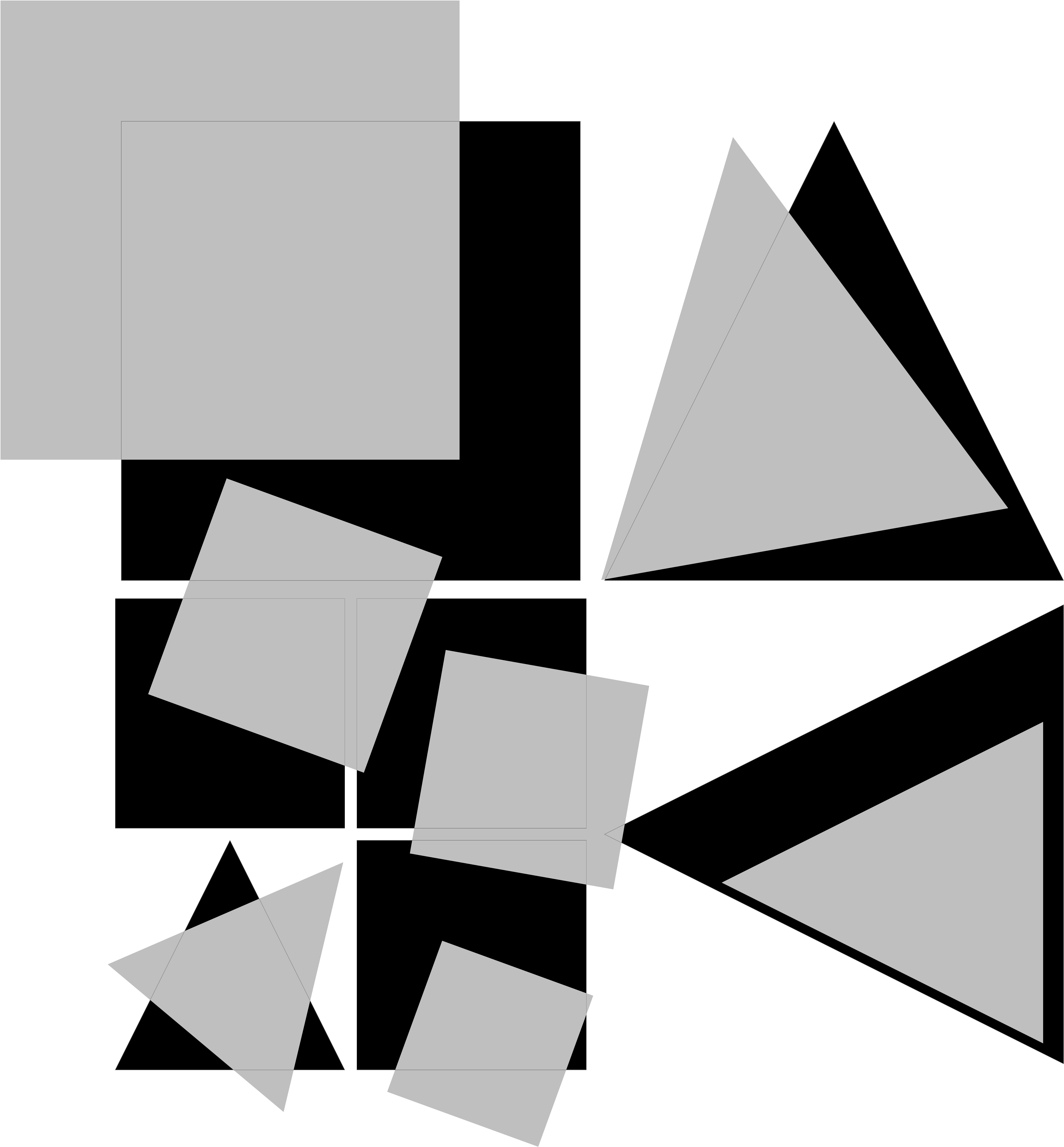}\\\\[0pt]
\includegraphics[scale=\scalefactor]{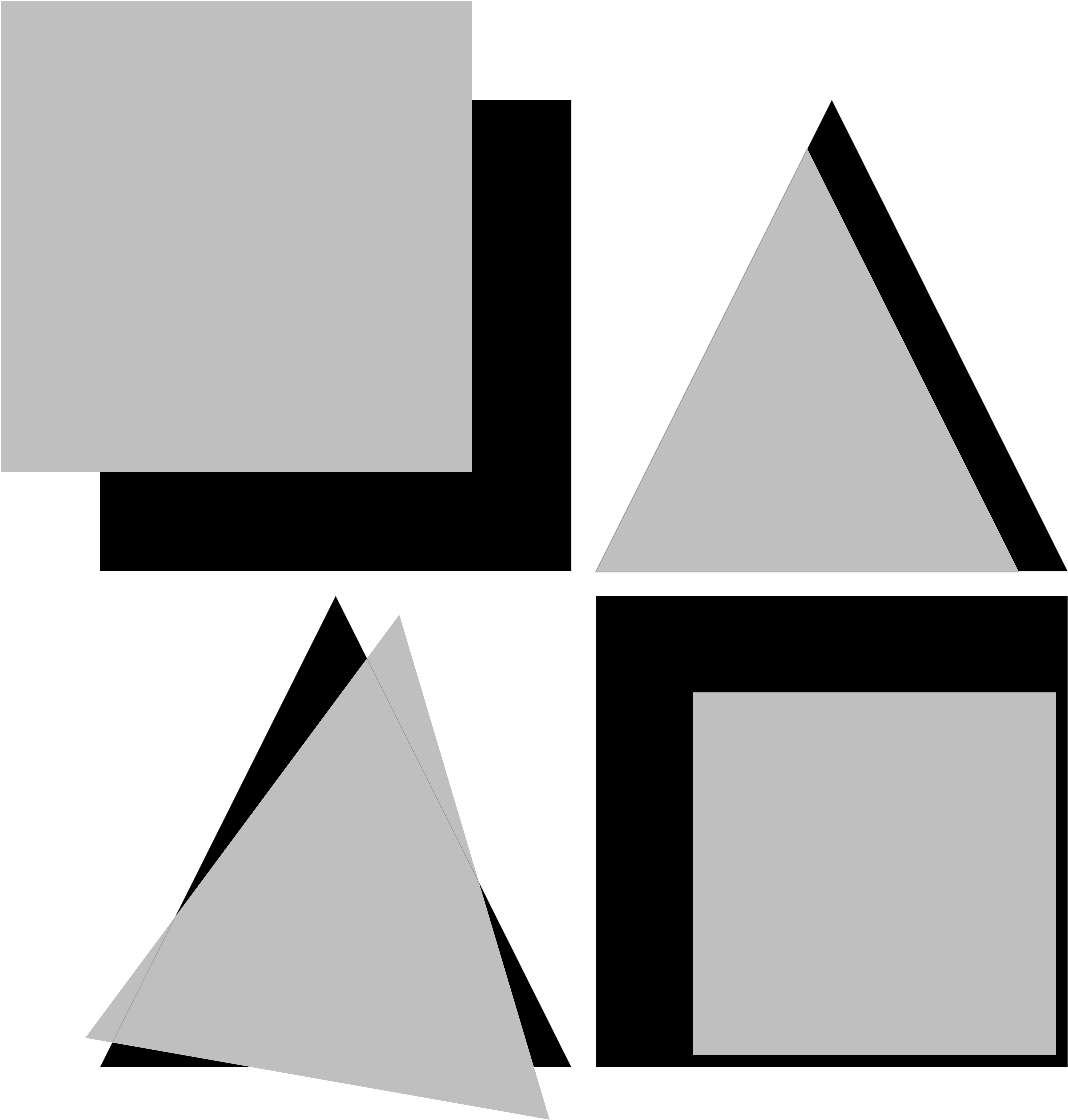} & \includegraphics[scale=\scalefactor]{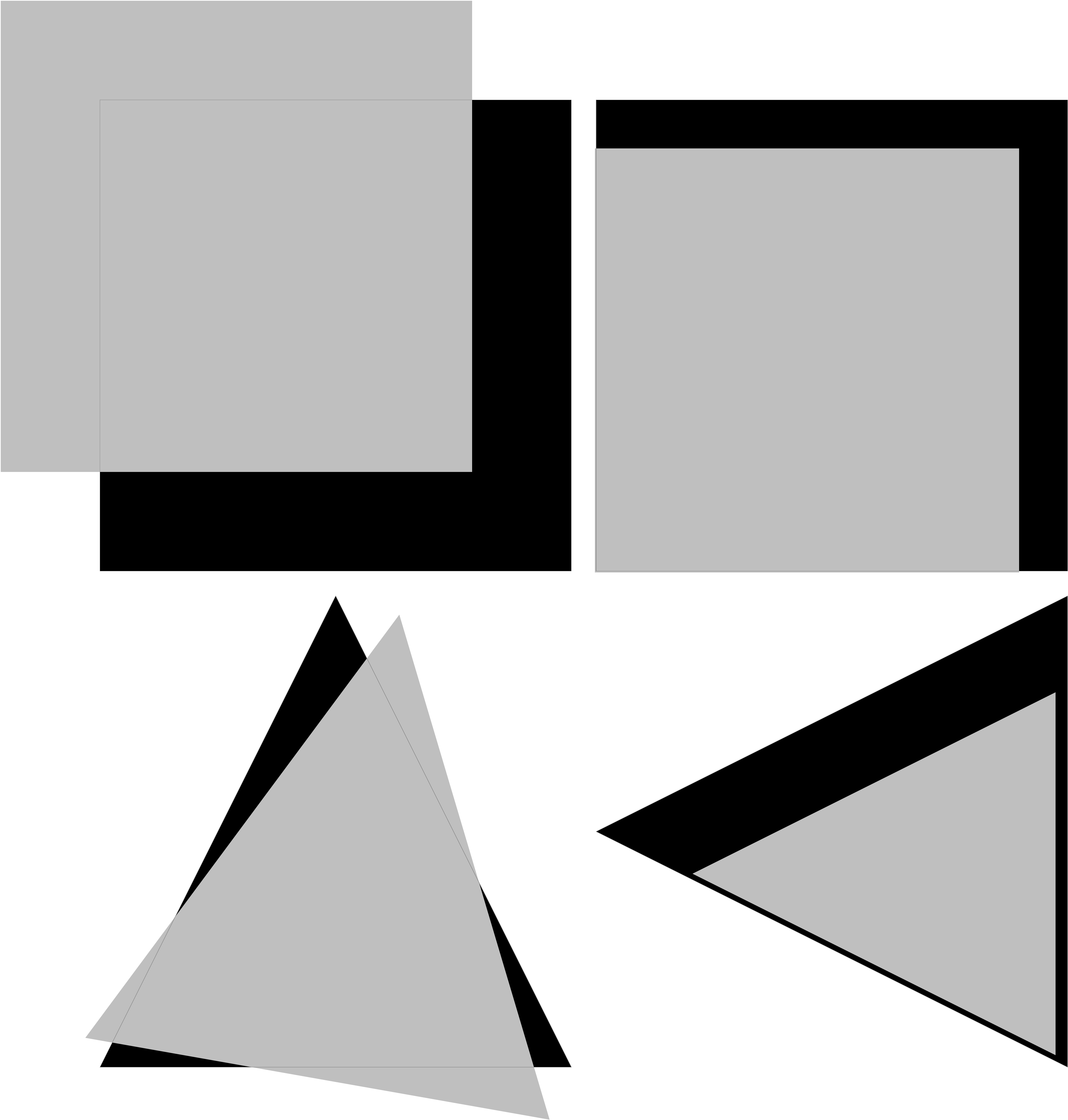} & \includegraphics[scale=\scalefactor]{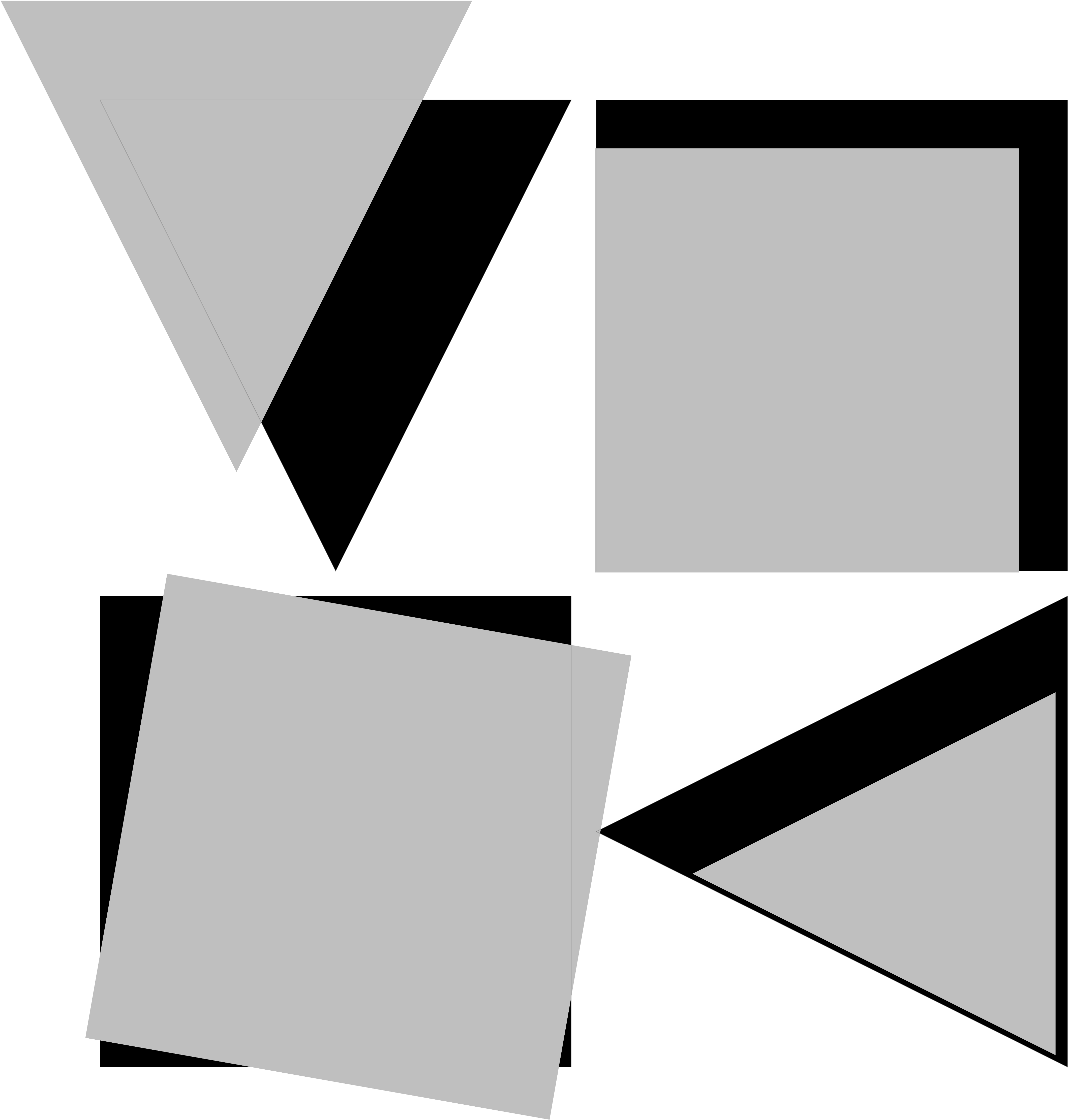} & \includegraphics[scale=\scalefactor]{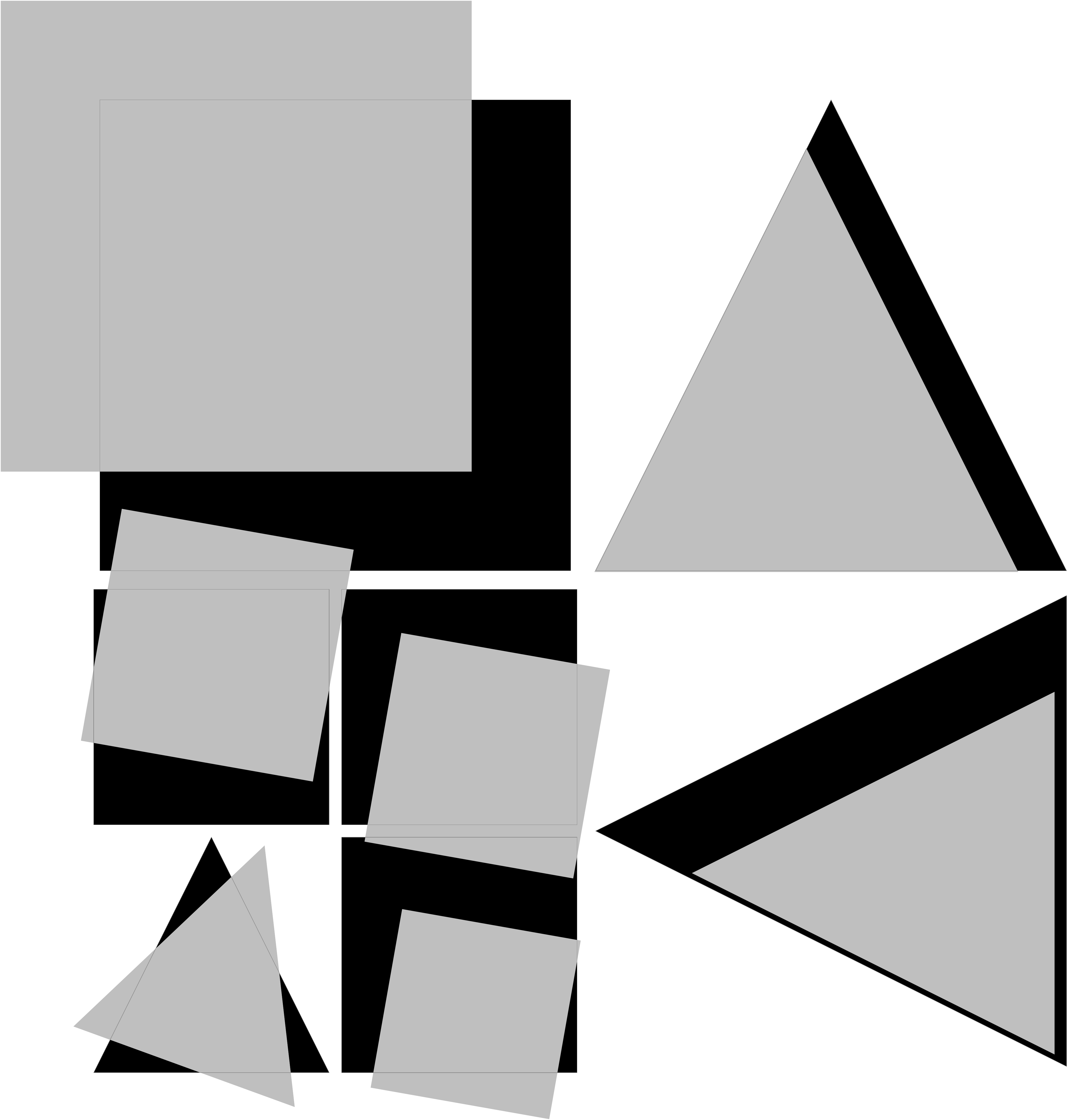}
\end{array}$%
}
  \caption{Top: four examples, where the first component of each example $(\delta(x_1, t), \val_{\calg}(t))$ has been shortened to $t$. Middle: overlay of each $\val_{\calg}(t)$ (black) and the corresponding $\val_{\calg_{\mathcal N}}(t)$ (grey) for some imperfect approximation of $F_\calg$ by an imaginary neural network $\mathcal N$. Bottom: similar overlay after a slight adjustment of the transformation parameters.\label{fi:collage examples}}
\end{figure}
As indicated in the second and third row of the figure, training~$\mathcal N$ should reduce the distance between $\val_{\calg}(t)$ and $\val_{\calg_{\mathcal N}}(t)$, ideally until the pictures match perfectly.
However, there is an interesting problem here for future research: For learning neural networks by back propagation, the loss function should be differentiable in each parameter of the network, i.e., in this case in each of the six parameters of each of the affine transformations involved. While the loss is easily seen to be continuous, it is not necessarily differentiable. Hence, it would be an interesting research question to find out whether this difficulty can be circumvented, for example by techniques such as those proposed by \citet{Grabocka.etAl:19} and \citet{Lee.etAl:20}. Even in cases where the loss is differentiable, another open problem is how to actually compute the partial derivatives. We leave these questions for future work because the purpose of the current paper is to propose the unified framework and motivate it by examples.

\subsection{Scene descriptions}
\label{sec:scene}

\begin{figure}[t]
    \centering
    \begin{tikzpicture}
\useasboundingbox (0,0) rectangle (2,5);
  \begin{scope}[transform canvas={scale=.78}, shift={(-8cm,0.75cm)}]
    \node[anchor=south west,inner sep=0] (image) at (0,0) {\includegraphics[width=0.4\textwidth]{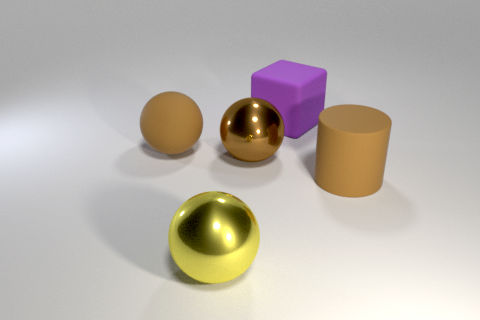}};
    \begin{scope}[x={(image.south east)},y={(image.north west)}]
        \draw[UmUBlue, thick,rounded corners] (0.32,0.08) rectangle (0.57,0.43);
        \draw[UmUBlue, thick,rounded corners] (0.28,0.48) rectangle (0.43,0.73);
        \draw[UmUBlue, thick,rounded corners] (0.45,0.47) rectangle (0.60,0.71);
        \draw[UmUBlue, thick,rounded corners] (0.51,0.55) rectangle (0.69,0.82);
        \draw[UmUBlue, thick,rounded corners] (0.63,0.38) rectangle (0.83,0.69);
    \end{scope}
\node [x={(image.south west)},y={(image.north west)}, xshift=56pt,yshift=30pt,inner sep=0, outer sep=0] (2,3) (obj5a) {};   
\node [x={(image.south west)},y={(image.north west)}, xshift=30pt,yshift=30pt,inner sep=0, outer sep=0] (2,3) (obj5b) {};    
\node [x={(image.south west)},y={(image.north west)}, xshift=30pt,yshift=145pt,inner sep=0, outer sep=0] (2,3) (obj5c) {};   
\node [x={(image.south west)},y={(image.north west)}, xshift=62pt,yshift=84pt,inner sep=0, outer sep=0] (2,3) (obj4a) {};   
\node [x={(image.south west)},y={(image.north west)}, xshift=62pt,yshift=140pt,inner sep=0, outer sep=0] (2,3) (obj4b) {};    
\node [x={(image.south west)},y={(image.north west)}, xshift=85pt,yshift=82pt,inner sep=0, outer sep=0] (2,3) (obj3a) {};   
\node [x={(image.south west)},y={(image.north west)}, xshift=85pt,yshift=135pt,inner sep=0, outer sep=0] (2,3) (obj3b) {};    
\node [x={(image.south west)},y={(image.north west)}, xshift=104pt,yshift=95pt,inner sep=0, outer sep=0] (2,3) (obj2a) {};   
\node [x={(image.south west)},y={(image.north west)}, xshift=104pt,yshift=130pt,inner sep=0, outer sep=0] (2,3) (obj2b) {};    
\node [x={(image.south west)},y={(image.north west)}, xshift=127pt,yshift=80pt,inner sep=0, outer sep=0] (2,3) (obj1a) {};   
\node [x={(image.south west)},y={(image.north west)}, xshift=127pt,yshift=125pt,inner sep=0, outer sep=0] (2,3) (obj1b) {};   

\node [right of=image,xshift=3cm] (v1) {\small \textcolor{black}{$\begin{bmatrix}
1.2\\[.5ex]
3.6\\[.5ex]
\vdots\\[.7ex]
3.5
\end{bmatrix}$}}; 
\node [above of=v1,yshift=2.5ex] (o1) {$o_1$};

\node [right of=v1] (v2) {\small$\begin{bmatrix}
3.4\\[.5ex]
1.8\\[.5ex]
\vdots\\[.7ex]
3.9
\end{bmatrix}$}; 
\node [above of=v2,yshift=2.5ex] (o2) {$o_2$};
\node [right of=v2]  (v3) {\small$\begin{bmatrix}
1.2\\[.5ex]
9.0\\[.5ex]
\vdots\\[.7ex]
8.1
\end{bmatrix}$};
\node [above of=v3,yshift=2.5ex] (o3) {$o_3$};
\node [right of=v3] (v4) {\small$\begin{bmatrix}
7.1\\[.5ex]
8.0\\[.5ex]
\vdots\\[.7ex]
0.8
\end{bmatrix}$}; 
\node [above of=v4,yshift=2.5ex] (o4) {$o_4$};
\node [right of=v4] (v5) {\small$\begin{bmatrix}
1.4\\[.5ex]
2.2\\[.5ex]
\vdots\\[.7ex]
6.6
\end{bmatrix}$}; 
\node [above of=v5,yshift=2.5ex] (o5) {$o_5$};
\draw[UmUBlue, rounded corners=5pt, thick,->] (obj5a)  -- (obj5b)  -- (obj5c) -| (o5) ; 
\draw[UmUBlue, rounded corners=5pt, thick,->] (obj4a)  -- (obj4b)  -| (o4) ; 
\draw[UmUBlue, rounded corners=5pt, thick,->] (obj3a)  -- (obj3b)  -| (o3) ; 
\draw[UmUBlue, rounded corners=5pt, thick,->] (obj2a)  -- (obj2b)  -| (o2) ; 
\draw[UmUBlue, rounded corners=5pt, thick,->] (obj1a)  -- (obj1b)  -| (o1) ; 
\node [below of=v2,yshift=-.5ex] (d2) {};
\node [below of=v5,yshift=-.5ex] (d5) {};

\node [right of=v5, xshift=20ex] (phi) {\Large $\varphi = r(x) \land b(x) \land r(y) \land g(y)$};
\node [below of=phi,shift={(-9.5ex,4.5ex)}] (dx1) {};
\node [below of=phi,shift={(-0.3ex,4.5ex)}] (dx2) {};
\node [below of=phi,shift={(8.5ex,4.5ex)}] (dy1) {};
\node [below of=phi,shift={(16.9ex,4.5ex)}] (dy2) {};

\draw[UmUBlue,thick, dashed,->,bend right=50] (d2) edge (dx1);
\draw[UmUBlue,thick, dashed,->,bend right=50] (d2) edge (dx2);
\draw[UmUBlue,thick, dashed,->,bend right=40] (d5) edge (dy1);
\draw[UmUBlue,thick, dashed,->,bend right=50] (d5) edge (dy2);

   \end{scope}
\end{tikzpicture}
    \caption{To predicates in a scene, the scene is first segmented into a set of objects, each of which is translated into a real-valued vector $o$. If the neural networks that realise the predicates of $\varphi$ have been satisfactorily trained, then there is an assignment of objects to the variables that satisfies $\varphi$. In this example, the predicates $r$, $b$, and $g$ can be understood as the properties of being round, being bronze coloured, and being gold coloured, respectively. Under this interpretation, the assignment of $o_2$ to $x$ and $o_5$ to $y$ satisfies $\varphi$.}
    \label{fig:clevr}
\end{figure}

%Suppose that a human operator issues the command ``attach the vent at the left-most outlet'' to an assembly robot. The robot would then be helped by the ability to link the nouns and verbs of the sentence to actual objects and actions, so as to know what operations to run and what components to use. More generally, this type of linking between abstract, often linguistic, entities and elements in the domain these entities refer to is known as \emph{grounding}. Other examples are grounding words in the caption underneath an image to segmented portions of the image itself, or to link words in a natural-language search query to tables, columns and threshold values in a database. 

  The final instantiation of the algebraic framework serves to learn (predicates representing) properties of physical objects,  so as to be able to ground the variables of a logic scene description in a scene. For simplicity, we limit the discussion to the type of synthetic scenes provided by~\cite{JohnsonEtAl:2017} through the CLEVR dataset. In this dataset, a scene is a configuration of a limited number of geometrical objects of different shapes, colors, materials, and sizes. For the present, we also disregard the relative positioning of objects within the scene, even though such relations could be modelled by an extension of the logic described in the following. A final simplification is the assumption that the scene has already been segmented and every object found has been translated into a real-valued vector $o$ by a pretrained visual embedding model. A scene is thus represented as a multiset of such vectors, and these multisets take the place of the objects in examples. 

  An example $(\varphi, O)$ consists of a multiset of object vectors $O$ and a term $\varphi$ representing a logic formula over unary predicates, logical connectives, and free variables. The predicates represent properties of objects, and the example provides an example of a true property (expressed by $\varphi$) of the scene object $O$, for some assignment of scene objects to the variables. An example is shown in Figure~\ref{fig:clevr}, where the objects in the scene have been translated into vector-based embeddings. The goal is to use the examples in order to discover the meaning of the predicates. %In this case, for example, if predicates $g$, $b$, and $r$, are taken to denote ``is gold'', ``is bronze'', and ``is round'', respectively, then $\varphi$ as a whole is true when the image contains a gold sphere and a bronze sphere. 

  Let us now formalise this setting in our framework. For this, let $\Gamma = \{\alpha, \beta\}$ and $\Sigma = C \cup P$ with 
  \[C = 
  \{\land \colon \beta \times \beta \to \beta, \; 
  \lor \colon \beta \times \beta \to \beta, \; 
  \rightarrow \colon \beta \times \beta \to \beta,\; 
  \lnot \colon \beta \to \beta \}
  \]
  and  $P = \{p \colon \alpha \to \beta, \; \dots, \; p_n \colon \alpha \to \beta\}$. Moreover, $X$ is a set of variables, all of type $\alpha$. 
  Let $\alg = ((\dom_\gamma)_{\gamma \in \Gamma}, (f_\alg)_{f \in \Sigma})$, where $\dom_\alpha = \real^n$ and $\dom_\beta = [0,1]$, the latter denoting the unit interval of real numbers. Our evaluation domain is $\dom_\beta$ with the usual order on $[0,1]$, and $\bigoplus V=\sum V/|V|$ is the average of $V$ for every finite subset $V$ of $[0,1]$. The interpretation of the logical connectives $f_\alg$ for $f \in \Sigma \backslash P$ is given in Table~\ref{tab:logic}. In this template algebra, it is possible to use neural networks to model the to be learned predicates in $P$. These predicates map object embeddings to real numbers, where the value signals that the input object has a certain property (according to the neural network implementing the predicate), and a lower value signals that it does not. Since the learning goal is to maximise the total value $\bigoplus_{O\in\samp}\val_\alg(O)$ of all examples in the example set $\samp$, we can use $1-\val_\alg(O)$ as the loss function for training. Given a finite set $\samp$ of examples, the sum in Equation~\eqref{eq:value} can then be maximised by training a family of neural networks $(\theta_p)_{p \in P}$ to realise the set of predicates~$P$.

\begin{table}[tb]
\caption{Algebraic operations to model logic connectives for scene grounding\label{tab:logic}}
\smallskip
\centering
\begin{tabular}{r@{\;}c@{\;}l@{\qquad}r@{\;}c@{\;}l}
\toprule
$\land_\alg(n_1,n_2)$ & = & $\textrm{min}(n_1,n_2)$ & 
$\lor_\alg(n_1,n_2)$ & =  & $\textrm{max}(n_1,n_2)$\\
$\rightarrow_\alg(n_1,n_2)$ & = & $\textrm{max}(1-n_1,n_2)$ &
 $\lnot_\alg(n_1)$ & = & $1  -  n_1$ \\
 \bottomrule
\end{tabular}
\end{table}

The natural continuation of this theoretical work is an empirical study starting out from a more or less abstract representation of objects. For example, this could be a real-valued vector produced by applying an out-of-the-box autoencoder to the segmented object.
  One would then generate a set of training examples  $\{(\varphi_1, O_1),\dots,(\varphi_n, O_n)\}$ for some $n\in\nat$, and train the neural networks $(\theta_p)_{p \in P}$.

\begin{comment}
\begin{tabular}{m{.25\textwidth}m{.18\textwidth}m{.5\textwidth}}
\toprule
\textbf{Image} & \textbf{Objects} & \textbf{Expression}\\
\midrule
\includegraphics[width=.25\textwidth]{} & $\{o_1, \ldots, o_5\}$ & $p_1(x_1) \land p_1(x_2)$ \\
\includegraphics[width=.25\textwidth]{} & $\{o_1, \ldots, o_9\}$ &$\neg(p_1(x_1) \lor p_1(x_2) \lor p_1(x_3))$ \\
\includegraphics[width=.25\textwidth]{} & $\{o_1, \ldots, o_3\}$ &$(p_2(x_3) \land p_1(x_3)) \rightarrow \neg p_1(x_1)$\\ 
\includegraphics[width=.25\textwidth]{} & $\{o_1, \ldots, o_{10}\}$ &$\neg(p_1(x_1) \rightarrow p_1(x_2))$ \\
\includegraphics[width=.25\textwidth]{} & $\{o_1, \ldots, o_4\}$ &$(p_2(x_3) \land p_1(x_3)) \lor ((p_1(x_1) \rightarrow  p_1(x_3))$ \\
\includegraphics[width=.25\textwidth]{} & $\{o_1, \ldots, o_6\}$ &$p_1(x_5)$\\ 
\bottomrule
\end{tabular}
\end{comment}

\section{Conclusion and future work}
\label{sec:future}
We have introduced an algebraic framework that can be instantiated for a variety of learning settings, allowing us to integrate grammatical inference and deep-learning approaches. As indicated throughout the paper, this is an initial theoretical effort, but framework lends itself well to empirical experiments, and this is also the the natural next step. One of the main advantages of the algebraic approach is that it allows us to look at a learning setting through a layer of abstraction. This makes it possible to treat various aspects in isolation, and opens for a wealth of research directions. 

In general, we may ask what properties of the algebraic domains and operations promote learning. We may also consider the effect of different valuation schemes, e.g., should we allow that two variables are assigned the same logic object, or require that every object is assigned to some variable. We may also study the effect of adding universal and existential quantifiers, and so requiring that certain properties hold for all or some of the elements in an example. Another question is the trade-off between computational complexity and expressiveness that comes from learning predicates of arity greater than one. Such an addition would allow us to learn relations, e.g., whether one object is smaller than another, but would doubtlessly also require greater computational effort. 

In the cases where we apply deep learning, it is natural to explore different forms of object representations and network architectures, but also the realisation of logic connectives in a numerical domain. Finally, we may also consider how we can manage a situation where the number of elements in an example is very large (consider, e.g., a painting of a city scene in great detail). Here, it may not be feasible to try every possible way of linking the variables in the accompanying formula to a segment of the scene, so one may devise various heuristics of limiting the number of combinations to evaluate. 

\begin{comment}
\acks{We are thankful to the reviewers for their insightful comments that helped improve this paper.}
\end{comment}

\bibliography{references}

\end{document}